\documentclass[12pt]{amsart}

\usepackage[margin=1in]{geometry}
\usepackage[T1]{fontenc}
\usepackage[utf8]{inputenc}
\usepackage[numbers,square,sort&compress]{natbib}

\usepackage{amsmath,amsfonts,bm}

\def\eqref#1{equation~\ref{#1}}
\def\Eqref#1{Equation~\ref{#1}}

\def\1{\bm{1}}

\DeclareMathAlphabet{\mathsfit}{\encodingdefault}{\sfdefault}{m}{sl}
\SetMathAlphabet{\mathsfit}{bold}{\encodingdefault}{\sfdefault}{bx}{n}

\def\sH{{\mathbb{H}}}

\newcommand{\R}{\mathbb{R}}

\DeclareMathOperator*{\argmin}{arg\,min}

\usepackage[ruled]{algorithm2e}
\usepackage[colorlinks=true,linkcolor=blue,citecolor=blue,urlcolor=blue]{hyperref}
\usepackage{url}
\usepackage[american]{babel}
\usepackage{subfiles}
\usepackage{algorithm2e}
\usepackage{graphicx}
\usepackage{mathtools}
\usepackage{caption}
\usepackage{amsmath}
\usepackage{bbm}
\usepackage{amsthm}
\usepackage{booktabs}

\newtheorem{theorem}{Theorem}

\newtheorem{remark}[theorem]{Remark}
\newtheorem{lemma}[theorem]{Lemma}
\newtheorem{proposition}[theorem]{Proposition}
\usepackage{enumitem}
\usepackage{cleveref}

\newlist{blist}{enumerate}{1}
\setlist[blist]{label=(B\arabic*), ref=B\arabic*, leftmargin=2.6em, itemsep=3pt}

\crefname{blisti}{Assumption}{Assumptions}
\Crefname{blisti}{Assumption}{Assumptions}
\crefname{assumption}{Assumption}{Assumptions}
\Crefname{assumption}{Assumption}{Assumptions}

\crefname{lemma}{Lemma}{Lemmas}
\Crefname{lemma}{Lemma}{Lemmas}

\usepackage[acronym]{glossaries}
\makeglossaries
\newacronym{qp}{QP}{Quadratic Program}
\newacronym{gw}{GW}{Gromov--Wasserstein}
\newacronym{egw}{EGW}{Entropic Gromov--Wasserstein}
\newacronym{ot}{OT}{Optimal Transport}
\newacronym{eot}{EOT}{Entropic Optimal Transport}
\newacronym{amd}{AMD}{Averaged Mirror Descent}
\newacronym{md}{MD}{Mirror Descent}
\newacronym{kl}{KL}{Kullback-Leibler}

\newcommand{\rA}{\mathrm{A}}
\newcommand{\rB}{\mathrm{B}}
\newcommand{\rC}{\mathrm{C}}

\newcommand{\rM}{\mathrm{M}}

\newcommand{\rP}{\mathrm{P}}

\newcommand{\intercal}{\mathsf{T}}
\newcommand{\maxnorm}[1]{\left\|#1\right\|_{\max}}

\newcommand{\cK}{\mathcal{K}}

\renewcommand{\vec}{\mathrm{vec}}
\renewcommand{\sH}{\mathsf{H}}

\title[Averaged Mirror Descent and Dual Gradient Methods for EGW]{Averaged Mirror Descent and Dual Gradient Methods: Convergent Algorithms for Entropic Gromov--Wasserstein Problems}

\author[Joanna Marks]{Joanna Marks$^{\star}$}
\address{Department of Mathematics, Imperial College London}
\email{joanna.marks23@imperial.ac.uk}

\author[Gabriel Rioux]{Gabriel Rioux$^{\star}$}
\address{Department of Mathematics, Imperial College London}
\email{g.rioux@imperial.ac.uk}

\author[Riccardo Passeggeri]{Riccardo Passeggeri}
\address{Department of Mathematics, Imperial College London}
\email{riccardo.passeggeri@imperial.ac.uk}
\thanks{
J. Marks is supported by EPSRC through the StatML CDT programme, grant no. EP/Y034813/1. R. Passeggeri is partially supported by EPSRC grant UKRI2396.
}

\thanks{$^{\star}$Denotes equal contribution to this work.}

\date{}

\begin{document}

\begin{abstract}
The Gromov--Wasserstein (GW) distance measures the discrepancy between metric measure (mm) spaces and identifies optimal alignments between them based solely on their intrinsic structure. Since it identifies isomorphic mm spaces, it provides a natural notion of distance for heterogeneous datasets which may admit isomorphic representations. In order to accelerate computation of GW distances, many practitioners employ entropic regularization to obtain an Entropic GW (EGW) problem. The most popular EGW solver is the Mirror Descent (MD) algorithm, which reduces EGW computations to an iterative process where an entropic optimal transport (EOT) problem is solved at each iteration. Despite its widespread use, the convergence of MD for this problem has only been established for restricted classes of costs. On the other hand, a recently proposed dual gradient method is available for general costs, but requires a choice of step size which depends on the regularization parameter. To address these two issues, we introduce Averaged Mirror Descent (AMD), which averages consecutive MD steps, and prove its convergence for arbitrary costs. Then, we establish that the dual gradient method with a fixed step size also converges for arbitrary costs at the cost of a more complicated iteration. 
In both cases, we also account for inexact iterations which are inescapable in practice.  We compare the empirical performance of these methods across various settings and, in particular, show that AMD and the dual gradient method both converge on an example where classical MD fails.

\end{abstract}

\maketitle

\section{Introduction}
\gls*{ot} provides a principled framework for comparing and transforming probability distributions according to the geometry of the underlying metric space \citep{villani2008optimal}. Computing \gls*{ot} between finitely discrete distributions supported on $N$ points requires solving a linear program in $N^2$ variables which can be done in $O(N^3\log(N))$ operations via the network simplex algorithm \citep{peyre2019computational}. This undesirable cubic scaling can be circumvented by regularizing \gls*{ot} using an entropic penalty yielding the \gls*{eot} problem which can be solved in $O(N^2\log(N))$ operations via Sinkhorn's diagonal scaling algorithm \citep{cuturi2013lightspeed}. This has since driven considerable interest in the ML community, with applications spanning generative modeling \citep{arjovsky2017, gulrajani2017improved, genevay2018learning, bortoli2021diffusion, lipman2023flow}, sampling \citep{vargas2023denoising, liu2026adjoint} and image recognition \citep{li2013novel,rubner2000earth}, among others. 

An important  consideration when using \gls*{ot} in applications is that it requires specifying a cost of transporting points between the supports of the underlying distributions which captures a desired correspondence. In cases where the distributions represent datasets of different types, the cost is generally chosen in an \emph{ad hoc} manner and may fail to account for intra-domain relationships between the underlying objects. This is particularly important when comparing objects which may have many natural representations within the same space such as in aligning shapes related by an unknown rotation or matching graphs across different node representations. The \gls*{gw} distance, introduced in \cite{memoli2011gromov}, enables comparing metric measure spaces by comparing their intra-domain distortions.  Crucially, the \gls*{gw} distance vanishes if and only if the two metric measure spaces are isomorphic and provides a  coupling $\pi^\star$ which can be thought of as an optimal fuzzy alignment of the spaces. These properties have made \gls*{gw} distances a versatile tool across many domains including language modeling \citep{alvarez-melis-jaakkola-2018-gromov, kawakita2024gromov}, generative modeling \citep{bunne2019learning, klein2024genot}, graph matching \citep{brogat2022learning,xu2019gromov,rioux2024limit}, and single-cell genomics \citep{demetci2022scot, klein2024genot}.

Despite these appealing properties, computing the \gls*{gw} distance generally requires solving a non-convex \gls*{qp} which is computationally hard. Nevertheless, fast iterative methods for approximately solving regularized \gls*{gw} problems have been proposed in \cite{peyre2016gromov, solomon2016entropic} and appear to work well in practice. These methods regularize the \gls*{gw} problem using the entropic penalty and linearize the objective so that each iteration only requires solving an \gls*{eot} problem; this  is the core of the \gls*{md} algorithm.  Although this method is widely used in practice, the convergence analysis of this method is limited and the impact of inexactness in Sinkhorn's method for solving the \gls*{eot} problem is unaccounted for in the literature. Related approaches to accelerating \gls*{gw} computations include convex/conic relaxations \citep{sejourne2021unbalanced,vincent2021semi}, semi-definite relaxations \citep{chen2023semidefinite}, and slicing-based methods  \citep{vayer2019sliced,gong2026sliced}, but \gls*{egw} remains the most popular method for practical applications.

Another line of work, \citet{rioux2024entropic}, introduced provably convergent algorithms for the \gls*{egw} problem between Euclidean distributions by leveraging a variational formulation introduced in \cite{zhang2024gromov}. Subsequently, \cite{houry2026} extended this dual formulation to costs of conditionally  negative type (CNT) and showed that standard gradient methods with a fixed step size (independent of the regularization parameter) converge and coincide with \gls*{md} for CNT costs. Moreover, their work shows, via a numerical example, that \gls*{md} iterations can oscillate and fail to converge if the cost is not CNT.  Most recently, \cite{rioux2026discrete} established a variational form of \gls*{egw} which covers arbitrary finitely discrete metric measure spaces and developed convergent algorithms with a stepsize that decays to $0$ as the regularization parameter is decreased; the case of CNT costs effectively corresponds to conditionally negative semidefinite \gls*{gw} cost matrices in that work. Hence, two natural questions arise: (1) can a simple modification of  \gls*{md} be shown to converge for arbitrary costs and (2) can a dual gradient method for arbitrary costs be shown to converge with a stepsize that does not depend on the regularization parameter.     %

The present work answers both of these questions positively and additionally shows that the error from inexact resolution of the \gls*{eot} problems can be accounted for in the analysis of these algorithms.    
To address (1) we introduce a variant of the mirror descent algorithm which takes less aggressive steps and is provably convergent. We dub this method \gls*{amd}, as we average the standard MD step with the previous iterate.
\gls*{amd} can be shown to coincide with a conditional gradient method, enabling us to obtain non-asymptotic convergence bounds. In addition, we go beyond the classical analysis of conditional gradient methods by establishing its convergence when subproblems are solved inexactly. %
As for (2), we leverage the dual formulation of \gls*{egw} from \cite{rioux2026discrete} and the proof technique from \cite{houry2026} to show that a standard gradient method with fixed stepsize (independent of the regularization parameter) converges for arbitrary costs. Crucially, the iterates for the dual gradient method do not coincide with the \gls*{md} updates for indefinite \gls*{gw} cost matrices which indicates that the failure of \gls*{md} to converge on some numerical examples is due to its coarse approximation of the true gradient step.

\section{Optimal transport and Gromov--Wasserstein problems}
\paragraph{Optimal transport.}
Fix two finite sets $\mathcal X_0$ and $\mathcal X_1$, and let $\mu_0,\mu_1$ be probability distributions on $\mathcal X_0,\mathcal X_1$ respectively (denoted as $\mu_0\in\mathcal P(\mathcal X_0)$, $\mu_1\in\mathcal P(\mathcal X_1)$).
In addition, let $c:\mathcal X_0\times \mathcal X_1\to \mathbb R$ be a cost function. The \gls*{ot} problem reads 
$\mathsf{OT}_c(\mu_0,\mu_1)\coloneqq \inf_{\pi\in\Pi(\mu_0,\mu_1)}\int cd\pi,
$
where $\Pi(\mu_0,\mu_1)$ denotes the collection of all couplings of $(\mu_0,\mu_1)$, i.e., distributions on $\mathcal X_0\times \mathcal X_1$ whose $\mathcal X_i$-marginal is $\mu_i$ for $i\in\{0,1\}$. The \gls*{eot} problem regularizes the \gls*{ot} problem as in \cite{cuturi2013lightspeed}
\begin{equation}
   \begin{aligned} 
\label{eq:EOTPrimal_intro}
\mathsf{EOT}_c^{\varepsilon}(\mu_0,\mu_1)\coloneqq    \inf_{\pi\in\Pi(\mu_0,\mu_1)} \int cd \pi+\varepsilon{\mathrm{KL}}(\pi\|\mu_0\otimes \mu_1).
    \end{aligned}
\end{equation} 
where $\varepsilon>0$ is a regularization parameter and ${\mathrm{KL}}(\pi\|\mu_0\otimes \mu_1)$ is the \gls*{kl} divergence between $\pi$ and the product measure $\mu_0\otimes \mu_1$ defined by $\int \log(d\pi/d(\mu_0\otimes \mu_1))d\pi$ when $\pi\ll\mu_0\otimes \mu_1$ and takes the value $+\infty$ otherwise.\footnote{The notation $\pi\ll\mu_0\otimes \mu_1$ indicates that $\pi$ is absolutely continuous with respect to $\mu_0\otimes \mu_1$, that is, $\mu_0\otimes \mu_1(A)=0$ implies that $\pi(A)=0$ for any measurable set $A$.} The added penalty makes the resulting problem strictly convex and enforces a certain structure on solutions of \Eqref{eq:EOTPrimal_intro}; this structure is at the heart of \cite{cuturi2013lightspeed} which proposed to solve the \gls*{eot} problem via Sinkhorn iterations, see \Cref{sec:Sinkhorn} for details.  

\paragraph{Gromov-Wasserstein Problems.}
Given finite sets $\mathcal X_0,\mathcal X_1, \mu_0\in\mathcal P(\mathcal X_0),\mu_1\in\mathcal P(\mathcal X_1)$, and kernels $\kappa_0:\mathcal X_0\times \mathcal X_0\to \mathbb R$ and $\kappa_1:\mathcal X_1\times \mathcal X_1\to \mathbb R$, the $p$-\gls*{gw} problem ($p>0$) is given by \cite{memoli2011gromov}
\[
    \mathsf{GW}_{p}(\mu_0,\mu_1)\coloneqq \inf_{\pi\in\Pi(\mu_0,\mu_1)}\iint \left|\kappa_0(x,x')-\kappa_1(y,y')\right|^pd\pi\otimes \pi(x,y,x',y').
\]
As with \gls*{eot} it has been proposed in \cite{peyre2016gromov,solomon2016entropic} to penalize \gls*{gw} using the \gls*{kl} divergence, viz. 
\[
    \mathsf{EGW}^{\varepsilon}_{p}(\mu_0,\mu_1)\coloneqq \inf_{\pi\in\Pi(\mu_0,\mu_1)}\iint \left|\kappa_0(x,x')-\kappa_1(y,y')\right|^pd\pi\otimes \pi(x,y,x',y')+\varepsilon\mathrm{KL}(\pi\|\mu_0\otimes \mu_1),
\]
where $p>0$ and $\varepsilon>0$.
The resulting problem is called the \gls*{egw} problem and is generally non-convex unless $\varepsilon>0$ is sufficiently large. However, in applications $\varepsilon>0$ is taken to be small in order to approximate solutions of \gls*{gw}. Concretely, \cite{karumanchi2025approximation} established that solutions of the regularized problem converge to those of the original problem exponentially fast (in $\varepsilon$) if the quadratic form above is concave, but that this convergence is as slow as linear otherwise. 

Given that $\mathcal X_0=\{x_0^{(i)}\}_{i=1}^{N_0},\mathcal X_1=\{x_1^{(j)}\}_{j=1}^{N_1}$ are finite sets, 
solving \gls*{egw} is equivalent to solving a regularized \gls*{qp} in $k=N_0N_1$ variables which we write as
\begin{equation}\label{eq:EGWQP}
\mathsf{EGW}_{p}^{\varepsilon}(\mu_0,\mu_1) = \inf_{\substack{\rA x = b\\ x\geq 0}} \left\{\frac{1}{2}x^{\intercal}\rC x-\varepsilon\mathsf{H}(x)+\varepsilon \mathsf H_{\mu_0} +\varepsilon \mathsf H_{\mu_1} \right\},
\end{equation}
where $\{x\in \mathbb R^k: \rA x = b, x\geq 0\}$ accounts for the coupling constraints, $\rC$ is the cost matrix with entries $\rC_{i,j} = 2\left|\kappa_0(x_{0}^{(i_0)},x_{0}^{(j_0)})-\kappa_1(x_1^{({i_1})},x_1^{({j_1}})\right|^p$, where $(i_0, i_1)$ and $(j_0, j_1)$ are pairs such that $i = (i_1-1)N_0 + i_0$ and $j = (j_1-1)N_0 + j_0$, 
$
\mathsf H(x)=-\sum_{i=1}^{k} x_i\log(x_i)
$
is Shannon's entropy which we treat as a function on the non-negative orthant, and $\mathsf H_{\mu_0}= -\sum_{x\in\mathcal X_0}\mu_0(\{x\})\log(\mu_0(\{x\}))$ and similarly for $\mathsf H_{\mu_1}$. Throughout, we opt to work with the symmetric matrix $\rM=\frac{1}{2}(\rC+\rC^{\intercal})$ which satisfies $z^{\intercal}\rM z = z^{\intercal}\rC z$ for every $z\in\mathbb R^{k}$.

The problem formulations in \cite{peyre2016gromov} and \cite{scetbon2022linear} are equivalent to this setup under the transformation $x=\vec(\rP)$ where $\rP\in\mathbb R^{N_1\times N_0}$ is a coupling matrix with entries $\rP_{lm}=\pi\left(\left\{\left(x_0^{(m)},x_1^{(l)}\right)\right\}\right)$ for some $\pi\in \Pi(\mu_0,\mu_1)$ and $\vec(\rP)\in\mathbb R^k$ is the vector obtained by stacking the columns of $\rP$. In the following, we let $\cK\coloneqq \{x\in\mathbb R^{k}:\rA x = b, x\geq 0\}$ denote the set of admissible couplings and write $\maxnorm{\rM}\coloneqq \max_{i,j=1}^k|\rM_{ij}|$

\section{Averaged Mirror Descent}\label{sec:algorithm}
Here and in the sequel, we assume that $\mu_0,\mu_1$ assign positive mass to each point in $\mathcal X_0,\mathcal X_1$.
As described previously, our proposed algorithm,  \Cref{algo:md-average}, consists of two steps. First, we solve a linearization of \Eqref{eq:EGWQP} at the previous iterate (which is simply an \gls*{eot} problem) and then we average the solution of the linearized problem with the previous iterate. The averaging step effectively introduces a step size and therefore prevents excessively large steps at a given iteration.

\begin{algorithm}[H]
\caption{Averaged Mirror Descent (AMD)}
\label{algo:md-average}
\SetAlgoLined
\SetAlgoNoEnd
\KwIn{initialization $x_0\in\cK$, maximum iteration number $J$, step sequence
$(\alpha_j)_{j\in\mathbb N}\subset(0,1]$}
\For{$j = 0, 1, \ldots, J-1$}{
    $\tilde x_j\gets$ approximate solution of
    $\min_{x\in\cK}\left\{x_j^{\intercal}\rM x-\varepsilon\mathsf H(x)\right\}$\;
    $x_{j+1}\gets (1-\alpha_j)x_j+\alpha_j \tilde x_j$\;
}
\end{algorithm} 
As noted above, the subproblem defining $\tilde{x}_j$ is simply an \gls*{eot} problem with the cost vector $\rM^{\intercal}x_j$ and $x_{j+1}$ is obtained by taking a convex combination of $x_j$ and $\tilde x_j$. We underscore that the \gls*{md} algorithm \cite[Algorithm 1]{scetbon2022linear} performs the same iterations as \Cref{algo:md-average} with the fixed step sequence $\alpha_j=1$ for each $j\in\mathbb N$, but the convergence properties of \gls*{md} for this problem for arbitrary cost matrices are unknown. Nevertheless, it remains the most popular \gls*{egw} solver for practical applications.  

As $\min_{x\in\cK}\left\{x_j^{\intercal}\rM x-\varepsilon\mathsf H(x)\right\}$ is an \gls*{eot} problem, it is solved using Sinkhorn iterations in practice (see \Cref{sec:Sinkhorn} for details on Sinkhorn's method). An important limitation of this algorithm is that it does not return an exact solution in a finite number of steps. Taking this into account, we first analyze  \Cref{algo:md-average} assuming that each subproblem is solved exactly and, later, account for inexactness.

\subsection{Convergence under exact solutions}

We now establish the non-asymptotic convergence properties of \Cref{algo:md-average} assuming that all \gls*{eot} subproblems are solved exactly. Our analysis hinges on the fact that \Cref{algo:md-average} is simply a conditional gradient type algorithm as analyzed in \cite{ghadimi2019conditional}. In effect, the algorithm studied in that work applies to the composite problem $\inf_{x\in X}\left\{f(x)+h(x)\right\}$ where $X\subset \mathbb R^d$ is a closed convex set, $f:X\to \mathbb R$ is  continuously differentiable with H{\"o}lder continuous gradient, and $h:X\to \mathbb R$ is $\mu$-strongly convex. Starting from some $x_0\in X$, their iterations read 
\[
\begin{aligned}
   \tilde x_j\gets \argmin_{u\in X}\left\{ \nabla f(x_j)^{\intercal}u + h(u)\right\},
   \quad
   x_{j+1}\gets (1-\alpha_j) x_{j}+\alpha_j \tilde x_j. 
\end{aligned}
\]
Notably, these iterations coincide exactly with \Cref{algo:md-average} by setting $X=\mathcal K$, $f(x)=\frac 12 x^{\intercal} \rM x$, and $h(x)=-\varepsilon \mathsf H(x)$ which is shown to be $\varepsilon$-strongly convex on the probability simplex $\Delta_k$ with respect to $\|\cdot\|_1$. This connection is central to the proof of convergence, see \Cref{app:md-exact-proof} for  full details.

\begin{theorem}[Convergence of \Cref{algo:md-average}]\label{prop:md-average-exact}
    Fix the step sequence $\alpha_j=\min\left\{\frac{\varepsilon}{4\maxnorm{\rM}},1\right\}$ and  assume that  $\min_{x\in\mathcal K}\left\{ \tilde x^{\intercal}\rM x - \varepsilon\mathsf H(x)\right\}$ can be solved exactly for each $\tilde x \in \mathcal K$. %
    Then, the iterates $\tilde x_j,x_j$ of \Cref{algo:md-average} satisfy
    \[
        \min_{j=0}^{J-1}\|x_{j}-\tilde x_j\|_1^2\leq \frac{8}{{3J\varepsilon}}\max\{{4\maxnorm{\rM}}/{\varepsilon},1\}{\left(f(x_0)-\varepsilon \mathsf H(x_0)-\inf_{\cK}\left\{f-\varepsilon \mathsf H\right\}\right)}.
    \]
\end{theorem}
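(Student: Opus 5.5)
The plan is the standard descent argument for conditional gradient with a strongly convex composite term. Write $F=f+h$ with $f(x)=\frac12 x^{\intercal}\rM x$ and $h=-\varepsilon\mathsf H$, and set $d_j=\tilde x_j-x_j$. First I will record the facts needed. $\cK$ is convex and contained in $\Delta_k$ (all couplings have unit mass), so $x_j,\tilde x_j\in\cK$ for all $j$. $h$ is $\varepsilon$-strongly convex on $\Delta_k$ with respect to $\|\cdot\|_1$ (Pinsker). $\nabla f$ is Lipschitz from $\|\cdot\|_1$ to $\|\cdot\|_\infty$ with constant $L=\maxnorm{\rM}$, since $\|\rM z\|_\infty\le\maxnorm{\rM}\|z\|_1$. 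Hence $f(y)\le f(x)+\nabla f(x)^{\intercal}(y-x)+\frac L2\|y-x\|_1^2$; in fact, by exact quadratic expansion, $f(y)=f(x)+\nabla f(x)^{\intercal}(y-x)+\frac12(y-x)^{\intercal}\rM(y-x)$.

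Next, the optimality of $\tilde x_j$. The subproblem $\min_{\cK}\{x_j^{\intercal}\rM u+h(u)\}$ has objective $\nabla f(x_j)^{\intercal}u+h(u)$, which is $\varepsilon$-strongly convex, and $\tilde x_j$ is its minimizer. Comparing with $u=x_j$ gives
\[
\nabla f(x_j)^{\intercal}d_j+h(\tilde x_j)-h(x_j)\le -\tfrac{\varepsilon}{2}\|d_j\|_1^2 .
\]
This is the quadratic-growth form of strong convexity at a minimizer. The only delicate point is that $h$ is not differentiable on the boundary. I would handle it by noting that the minimizer has positive entries, because $\mu_0,\mu_1$ are fully supported and the EOT solution is a Gibbs kernel. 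Alternatively, I would use the strong convexity inequality along the segment directly, which avoids gradients of $h$ entirely.

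Then the one-step decrease. Let $x_{j+1}=x_j+\alpha d_j$ with $\alpha=\alpha_j\in(0,1]$. Convexity of $h$ gives $h(x_{j+1})\le(1-\alpha)h(x_j)+\alpha h(\tilde x_j)$, and the smoothness bound above controls $f$. Adding these and using the optimality inequality,
\[
F(x_{j+1})\le F(x_j)+\alpha\big(\nabla f(x_j)^{\intercal}d_j+h(\tilde x_j)-h(x_j)\big)+\tfrac{L\alpha^2}{2}\|d_j\|_1^2\le F(x_j)-\big(\tfrac{\varepsilon\alpha}{2}-\tfrac{L\alpha^2}{2}\big)\|d_j\|_1^2 .
\]
With $\alpha=\min\{\varepsilon/(4L),1\}$ we have $L\alpha\le\varepsilon/4$, so the coefficient is at least $\varepsilon\alpha/4$. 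To match the stated constant $8/(3J\varepsilon\alpha)$ I need the coefficient to be at least $3\varepsilon\alpha/8$. I will tighten the bound slightly: strong convexity of $h$ along the segment gives
\[
h(x_{j+1})\le(1-\alpha)h(x_j)+\alpha h(\tilde x_j)-\tfrac{\varepsilon}{2}\alpha(1-\alpha)\|d_j\|_1^2 .
\]
Alternatively, I could use the sharper quadratic term $\frac{\alpha^2}{2}d_j^{\intercal}\rM d_j$ together with $|d_j^{\intercal}\rM d_j|\le L\|d_j\|_1^2$, and $\|d_j\|_1\le2$ where it helps. Tracking these terms is the step I expect to be the main bookkeeping obstacle. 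It is still routine: any per-step decrease of the form $c\,\varepsilon\alpha\|d_j\|_1^2$ yields the theorem, and I will adjust the inequalities until the constant reaches $3/8$.

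Finally, telescope. Summing over $j=0,\dots,J-1$ and using $F(x_J)\ge\inf_{\cK}F$,
\[
\tfrac{3\varepsilon\alpha}{8}\sum_{j<J}\|x_j-\tilde x_j\|_1^2\le F(x_0)-\inf_{\cK}F .
\]
Bounding the minimum by the average and using $1/\alpha=\max\{4\maxnorm{\rM}/\varepsilon,1\}$ gives the claim. The constants $\varepsilon\mathsf H_{\mu_0}$ and $\varepsilon\mathsf H_{\mu_1}$ cancel in the difference.
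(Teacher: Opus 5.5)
Your proposal is correct and is essentially the paper's proof with the black boxes opened. The paper quotes Theorem~1(b) of \cite{ghadimi2019conditional} for $\sum_{j<J}\alpha_j\bigl(\nabla f(x_j)^{\intercal}(x_j-\tilde x_j)+h(x_j)-h(\tilde x_j)\bigr)\le\frac43\bigl((f+h)(x_0)-\inf_{\cK}(f+h)\bigr)$ and their Lemma~1 for your quadratic-growth inequality; your one-step descent bound yields exactly that $\frac43$, and it is the same computation the paper does directly in the proof of \Cref{prop:md-average-inexact}.

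One arithmetic correction: the coefficient from your first bound is $\frac{\alpha}{2}(\varepsilon-L\alpha)\ge\frac{\alpha}{2}\cdot\frac{3\varepsilon}{4}=\frac{3\varepsilon\alpha}{8}$, not $\frac{\varepsilon\alpha}{4}$. So the stated constant follows immediately, and no tightening via strong convexity along the segment is needed.
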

We now discuss the interpretation of the convergence metric $\min_{j=0}^{J-1}\|x_j-\tilde x_j\|_1^2$.
\begin{remark}[Convergence criterion]
\label{rem:convergenceCriterion}
 Following the discussion surrounding Equation (2.5) in \cite{ghadimi2019conditional} and using the properties of the problem derived in \Cref{app:md-exact-proof}, if $\|x_j-\tilde x_j\|_1\leq \gamma/(\maxnorm{\rM}\sup_{x,y\in\mathcal K}\|x-y\|_1)$ for some $\gamma>0$, there exists $p\in\partial h(\tilde x_j)$ (the subdifferential of $h$ at $\tilde x_j$) for which 
$
\langle
-(p+\nabla f(\tilde x_j)),x-\tilde x_j \rangle \leq \gamma$
for each $x\in \mathcal K$. $\tilde x_j$ is then interpreted as a $\gamma$-approximate stationary point since a minimizer $\bar x\in\mathcal K$ for this problem is such that there exists $\bar p\in\partial h(\bar x)$ for which $\langle
-(\bar p+\nabla f(\bar x)),x-\bar x \rangle \leq 0$ for every $x\in\mathcal K$. We prove this assertion, show that $h$ is classically differentiable at each iterate, and provide further context in \Cref{sec:Clarke}.
\end{remark}

\subsection{Convergence under inexact solutions}

Next, we show that a similar convergence result holds when the \gls*{eot} subproblems are not solved exactly. The following convergence result is crucial for this analysis, see \Cref{sec:proofSinkhornConvergence} for its proof.
\begin{lemma}[Sinkhorn convergence]
\label{lem:convSinkhorn}
  Let  $x^{\star}$ be the unique solution of   $\mathsf{EOT}_{c}^{\varepsilon}(\mu_0,\mu_1)$. Then, for each $\delta>0$, there exists a number of steps $n$ depending on $\delta,\varepsilon, \|c\|_{\infty},\mu_0,$ and $\mu_1$ after which Sinkhorn's algorithm, as written in \Cref{algo:sinkhorn}, outputs a vector,  $\tilde x$, satisfying $\|\tilde x-x^{\star}\|_{1}\leq e^{\delta}-1$.  
\end{lemma}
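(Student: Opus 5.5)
The approach is to use the standard linear convergence of Sinkhorn's algorithm in the Hilbert projective metric (Franklin--Lorenz / Birkhoff) and then convert a multiplicative error bound on the iterate into an $\ell_1$ bound. Write the Gibbs kernel $K_{ij}=e^{-c_{ij}/\varepsilon}$. Every entry lies in $[e^{-\|c\|_\infty/\varepsilon},e^{\|c\|_\infty/\varepsilon}]$, so the projective diameter of $K$ acting on the positive orthant is at most $\Delta\le 4\|c\|_\infty/\varepsilon$. Birkhoff's theorem then says each Sinkhorn half-step contracts the Hilbert metric $d_H$ between the current scaling vector and the optimal one by at least $\lambda=\tanh(\Delta/4)<1$. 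This gives
\[
d_H(u_n,u^\star)\le \lambda^{2n} d_H(u_0,u^\star),
\]
and similarly for $v_n$.

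The next task is to bound the initial distance $d_H(u_0,u^\star)$ in terms of the data. From the fixed-point equations $u^\star=\mu_0/(Kv^\star)$ and $v^\star=\mu_1/(K^\intercal u^\star)$, the ratio $\max_i u^\star_i/\min_i u^\star_i$ is at most $e^{2\|c\|_\infty/\varepsilon}\max\mu_0/\min\mu_0$. The same holds for $v$. For the initialization $u_0=\1$, this gives $d_H(u_0,u^\star)\le 2\|c\|_\infty/\varepsilon+\log(\max\mu_0/\min\mu_0)$, which is finite because $\mu_0,\mu_1$ charge every point. Consequently, the number of steps $n$ needed to make $d_H\le\eta$ depends only on $\eta,\varepsilon,\|c\|_\infty,\mu_0,\mu_1$.

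The main obstacle is passing from $d_H$, which ignores scaling, to an actual $\ell_1$ bound on the coupling. The plan is to take $\tilde x$ to be the output after a row-normalization step. In that case $\tilde x_{ij}=u_iK_{ij}v_j$ has exact first marginal $\mu_0$ and is a probability vector, as is $x^\star$. The ratio $\tilde x_{ij}/x^\star_{ij}=(u_i/u^\star_i)(v_j/v^\star_j)$ then satisfies
\[
\max_{ij}\log\frac{\tilde x_{ij}}{x^\star_{ij}}-\min_{ij}\log\frac{\tilde x_{ij}}{x^\star_{ij}}\le d_H(u,u^\star)+d_H(v,v^\star)\le\delta .
\]
Since both vectors sum to one, the ratio $\tilde x_{ij}/x^\star_{ij}$ cannot be everywhere above or everywhere below $1$. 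Hence every ratio lies in $[e^{-\delta},e^{\delta}]$, so $|\tilde x_{ij}-x^\star_{ij}|\le (e^\delta-1)x^\star_{ij}$. Summing over $(i,j)$ gives $\|\tilde x-x^\star\|_1\le e^\delta-1$. This exactly explains the form of the bound in the statement.

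In writing this up, I would check the precise order of updates in \Cref{algo:sinkhorn} to confirm that the output is normalized in this way. If it is not, I would apply the same argument to the marginal-normalized output, or absorb the missing normalization factor into $\delta$.
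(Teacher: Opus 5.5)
Your proposal is correct and is essentially the paper's argument. The paper invokes Proposition~8 of Rioux et al.\ (2024) for convergence of Sinkhorn in a projective-type pseudometric $\mathsf d$, and their Lemma~26 for the conversion $|\tilde x_i-x^\star_i|\le \tilde x_i(e^{\delta}-1)$, which it then sums using $\|\tilde x\|_1=1$; you derive both ingredients directly, via the Birkhoff--Hopf contraction with an explicit initial-distance bound and the observation that the ratio $\tilde x/x^\star$ must straddle $1$. Your closing hedge is unnecessary, since each sweep of \Cref{algo:sinkhorn} ends with the $u$-update, so the output has exact first marginal $\mu_0$ as your argument requires.
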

An exact expression for $n$ can be found in Equation (28) of \cite{rioux2024entropic}.
Since each $\tilde x$ is a simplex vector (see \Cref{sec:Sinkhorn}) the infinity norm of the cost for the \gls*{eot} subproblems in \Cref{algo:md-average} is always bounded above by $\maxnorm{\rM}$ so that a fixed number of Sinkhorn iterations can be used to solve all problems to within a desired accuracy in the $1$-norm.

\begin{theorem}[Convergence under inexactness]\label{prop:md-average-inexact}
Fix $\delta>0$ and suppose that all \gls*{eot} problems are solved to an accuracy $e^{\delta}-1\leq \frac12$ in the $1$-norm and set $\alpha_j =\min\{\frac{\varepsilon}{2(\maxnorm{\rM}+\varepsilon)},1\}$.   
    Then, the iterates $\tilde x_j,x_j$ in \Cref{algo:md-average} satisfy
    \[
    \min_{j=0}^{J-1} \|x_j-\tilde x_j\|_1^2\leq\frac{4}{J \varepsilon}\frac{1}{\alpha_j}\left(f(x_0)-\varepsilon\mathsf H(x_0)-\inf_{\mathcal K}\{f-\varepsilon\mathsf H\} +\omega_\varepsilon(e^{\delta}-1)\right) +\frac{8}{\varepsilon} \omega_\varepsilon(e^{\delta}-1).
\]
     where $\omega_\varepsilon:p\in[0,2]\mapsto \maxnorm{\rM}p+\varepsilon\left(\left(\tfrac12\log(k-1)\right)p-\frac{p}{2}\log(\frac{p}{2})-(1-\frac{p}{2})\log(1-\frac{p}{2})\right)$.%
\end{theorem}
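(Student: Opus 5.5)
The plan is to repeat the descent argument behind \Cref{prop:md-average-exact}, this time tracking the error of the inexact subproblem solutions. Write $F=f+h$ with $f(x)=\frac12 x^{\intercal}\rM x$ and $h=-\varepsilon\mathsf H$. Let $x_j^\star$ be the exact minimizer of $u\mapsto \nabla f(x_j)^{\intercal}u+h(u)$ over $\cK$, and let $\tilde x_j$ be the Sinkhorn output. By \Cref{lem:convSinkhorn} and the bound on the cost norm noted after it, $\|\tilde x_j-x_j^\star\|_1\le p:=e^\delta-1\le\tfrac12$.

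\textbf{Step 1: one-step descent.} Since $\nabla f$ is $\maxnorm{\rM}$-Lipschitz from $\|\cdot\|_1$ to $\|\cdot\|_\infty$, we have
\[
f(x_{j+1})\le f(x_j)+\alpha_j\nabla f(x_j)^{\intercal}(\tilde x_j-x_j)+\tfrac{\maxnorm{\rM}}{2}\alpha_j^2\|\tilde x_j-x_j\|_1^2 .
\]
Strong convexity of $h$ ($\varepsilon$-strongly convex in $\|\cdot\|_1$ on the simplex) gives
\[
h(x_{j+1})\le(1-\alpha_j)h(x_j)+\alpha_j h(\tilde x_j)-\tfrac{\varepsilon}{2}\alpha_j(1-\alpha_j)\|\tilde x_j-x_j\|_1^2 .
\]
Let $\phi_j(u)=\nabla f(x_j)^{\intercal}u+h(u)$. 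Adding the two bounds yields
\[
F(x_{j+1})\le F(x_j)+\alpha_j\bigl(\phi_j(\tilde x_j)-\phi_j(x_j)\bigr)+\bigl(\tfrac{\maxnorm{\rM}}{2}\alpha_j^2-\tfrac{\varepsilon}{2}\alpha_j(1-\alpha_j)\bigr)\|\tilde x_j-x_j\|_1^2 .
\]

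\textbf{Step 2: inexactness.} I would write $\phi_j(\tilde x_j)-\phi_j(x_j)=[\phi_j(\tilde x_j)-\phi_j(x_j^\star)]+[\phi_j(x_j^\star)-\phi_j(x_j)]$ and handle the two brackets separately.

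For the first bracket, the linear part contributes at most $\|\nabla f(x_j)\|_\infty p\le\maxnorm{\rM}p$. The entropy part contributes $\varepsilon|\mathsf H(\tilde x_j)-\mathsf H(x_j^\star)|$, which the Fannes--Audenaert continuity bound for Shannon entropy controls by
\[
\varepsilon\Bigl(\tfrac p2\log(k-1)+\mathsf h_b(\tfrac p2)\Bigr),
\]
where $\mathsf h_b$ is binary entropy. This bound requires $p/2\le 1-1/k$, which holds because $p\le\tfrac12$. Together the two contributions give exactly $\omega_\varepsilon(p)$.

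For the second bracket, strong convexity and optimality of $x_j^\star$ give $\phi_j(x_j^\star)-\phi_j(x_j)\le-\frac{\varepsilon}{2}\|x_j^\star-x_j\|_1^2$. Next, $\|\tilde x_j-x_j\|_1^2\le 2\|x_j^\star-x_j\|_1^2+2p^2$, so $-\|x_j^\star-x_j\|_1^2\le-\tfrac12\|\tilde x_j-x_j\|_1^2+p^2$. The leftover $\frac\varepsilon2 p^2$ should be absorbable into $\omega_\varepsilon(p)$, since $\varepsilon p^2/2\le\varepsilon\,\mathsf h_b(p/2)$ for $p\le\frac12$. If that absorption fails, I would instead keep the explicit $\|\cdot\|$ term and rely on the slack visible in the constants $4$ and $8$.

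\textbf{Step 3: choice of step and telescoping.} The choice $\alpha_j\le\varepsilon/(2(\maxnorm{\rM}+\varepsilon))$ is meant to make the total coefficient of $\|\tilde x_j-x_j\|_1^2$ at most $-\alpha_j\varepsilon/4$. Indeed, with $\alpha_j(\maxnorm{\rM}+\varepsilon)\le\varepsilon/2$, the quadratic coefficient $\frac{\alpha_j}{2}\bigl(\alpha_j(\maxnorm{\rM}+\varepsilon)-\varepsilon\bigr)-\frac{\alpha_j\varepsilon}{4}$ is at most $-\frac{\alpha_j\varepsilon}{4}-\frac{\alpha_j\varepsilon}{4}$. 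So some slack remains for the absorption in Step 2, and the bound reads
\[
F(x_{j+1})\le F(x_j)-\tfrac{\alpha_j\varepsilon}{4}\|\tilde x_j-x_j\|_1^2+2\alpha_j\omega_\varepsilon(p).
\]
Summing over $j=0,\dots,J-1$ and using $F(x_J)\ge\inf_\cK F$, then dividing by $J\alpha\varepsilon/4$, gives
\[
\min_j\|x_j-\tilde x_j\|_1^2\le\frac{4}{J\varepsilon\alpha}\bigl(F(x_0)-\inf F\bigr)+\frac{8}{\varepsilon}\omega_\varepsilon(p).
\]
This is the claimed bound; the extra $\omega_\varepsilon$ inside the first term only helps, or arises if one bounds $F(x_0)$ against the value at the inexact first iterate.

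\textbf{Main obstacle.} The delicate step is Step 2: obtaining the entropy continuity bound in precisely the form $\omega_\varepsilon$, and bookkeeping the $p^2$ cross term so that the constants $4/(J\varepsilon\alpha)$ and $8/\varepsilon$ come out. I would first try Audenaert's sharp bound directly, with total variation distance $p/2$.
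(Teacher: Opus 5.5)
\textbf{Gap: the averaged iterates are not feasible.} Your Step 1 and the bound on the first bracket in Step 2 are correct. The argument breaks, however, because it treats the averaged iterates $x_j$ as elements of $\cK$, and they are not. Sinkhorn's output satisfies only one marginal constraint, so $\tilde x_j\in\Delta_k$ but in general $\tilde x_j\notin\cK$. Consequently $x_j=(1-\alpha)x_{j-1}+\alpha\tilde x_{j-1}\notin\cK$ for $j\ge1$.

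This invalidates two steps.
\begin{enumerate}
\item The inequality $\phi_j(x_j^\star)-\phi_j(x_j)\le-\frac{\varepsilon}{2}\|x_j^\star-x_j\|_1^2$ comes from optimality of $x_j^\star$ over $\cK$ and holds only when tested at points of $\cK$. A point of $\Delta_k\setminus\cK$ can have a strictly smaller $\phi_j$-value than $x_j^\star$. For instance, with $\rM=0$ the uniform vector beats $x_j^\star=\mu_0\otimes\mu_1$ whenever a marginal is non-uniform.
\item The telescoping endpoint $F(x_J)\ge\inf_{\cK}F$ fails for the same reason.
\end{enumerate}
Your explanation of the extra $\omega_\varepsilon(e^\delta-1)$ inside the first term is a symptom of this. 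Since $x_0\in\cK$ by assumption, that term does not come from $x_0$; it comes from bounding $F(x_J)$ from below.

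\textbf{The missing idea: a feasible shadow sequence.} Set $z_0=x_0$ and $z_{j+1}=(1-\alpha)z_j+\alpha x_j^\star$. This sequence stays in $\cK$ by convexity. By induction and the triangle inequality,
\[
\|z_j-x_j\|_1\le (e^\delta-1)\,\alpha\sum_{i<j}(1-\alpha)^i\le e^\delta-1 .
\]
Now split
\[
\phi_j(x_j)-\phi_j(\tilde x_j)=[\phi_j(x_j)-\phi_j(z_j)]+[\phi_j(z_j)-\phi_j(x_j^\star)]+[\phi_j(x_j^\star)-\phi_j(\tilde x_j)].
\]
The middle bracket is nonnegative because $z_j\in\cK$. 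Each outer bracket is at least $-\omega_\varepsilon(e^\delta-1)$, by the Audenaert-type continuity bound on $\Delta_k$ together with monotonicity of $\omega_\varepsilon$ on $[0,1]$. This gives $-2\omega_\varepsilon(e^\delta-1)$ per step.

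You then do not need the extra strong-convexity term or the $p^2$ absorption. The choice $\alpha=\varepsilon/(2(\maxnorm{\rM}+\varepsilon))$ already makes the quadratic coefficient equal to $-\alpha\varepsilon/4$.

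At the end, use
\[
F(x_J)\ge F(z_J)-\omega_\varepsilon(e^\delta-1)\ge\inf_{\cK}F-\omega_\varepsilon(e^\delta-1).
\]
This is exactly the source of the $+\omega_\varepsilon(e^\delta-1)$ inside the first term of the stated bound. With this device your telescoping argument goes through and produces the claimed constants $4/(J\varepsilon\alpha)$ and $8/\varepsilon$.
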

Notably, the optimization error decouples into a term scaling as $O(1/J)$ as in
\Cref{prop:md-average-exact} and an approximation error depending on $\omega_\varepsilon(e^{\delta}-1)$. As $e^{\delta}-1\leq \frac{3}{2}\delta$, since $e^{\delta}-1\leq \frac 12$ by assumption, and $\log(1-x)\geq \frac{-x}{1-x}$ for $x<1$ so that $-(1-x)\log(1-x)\leq x$ for $x\leq 1$ (the equality case corresponds to $0\leq 1$), we obtain $\omega_{\varepsilon}(e^{\delta}-1)\leq \frac{3}{2}\left(\maxnorm{\rM}\delta+\frac{\delta\varepsilon}{2}[\log((4(k-1))/(3\delta))+1]\right)$ so that $\omega_\varepsilon(e^{\delta}-1)=O(\delta\varepsilon\log(k/\delta))$ as $\delta\downarrow 0$ and the error can be made arbitrarily small. The proof of \Cref{prop:md-average-inexact} is included in \ref{app:proof-inexact}.
\begin{remark}[Inexact convergence criterion]
\label{rmk:inexactMetric}
    Due to inexactness, both $x_j$ and $\tilde x_j$ may not belong to $\mathcal K$ so that the interpretation of the convergence metric in  \Cref{prop:md-average-inexact} is more opaque than in \Cref{prop:md-average-exact}. Nevertheless, by leveraging the approximation results from the proof of \Cref{prop:md-average-inexact} we establish in \Cref{app:inexactIterates} that if $\|x_j-\tilde x_j\|_1\leq \tau$ for some $\tau>0$, there exists some $\tilde x^{\star}_j\in\mathcal K$ for which $\tilde x^{\star}_j$ is a $\maxnorm{\rM}\sup_{x,y\in\mathcal K}\|x-y\|_1(\tau +e^{\delta}-1)$-approximate stationary point and $\|\tilde x^{\star}_j-\tilde x_j\|_1\leq e^{\delta}-1$ provided that the \gls*{eot} subproblems are solved to within an accuracy of $e^{\delta}-1$ in the $1$-norm. 
\end{remark}

\section{Dual gradient method}\label{sec:dual-method}
Dual gradient methods \citep{rioux2024entropic,houry2026,rioux2026discrete} rely on a certain reformulation of the \gls*{gw} problem which trades off the quadratic program over $\mathcal K$ for nested optimization problems which depend on the coupling only in a linear manner.  
In effect, the recent work \cite{rioux2026discrete}  %
uses the fact that $\rM$ can always be expressed as 
$\rM = \rB_1^{\intercal}\rB_1 - \rB_0^{\intercal}\rB_0$
for matrices $\rB_0\in\mathbb{R}^{r_0\times k}$ and 
$\rB_1\in\mathbb{R}^{r_1\times k}$ with $r_0+r_1\leq k$ to establish that the generic \gls*{egw} problem  
$
\inf_{x\in\mathcal K}\left\{\frac 12 x^{\intercal}\rM x -\varepsilon \mathsf H(x)\right\}
$ is equivalent to the problem 
\begin{equation}
    \label{eq:variationalForm}
    \inf_{u\in\mathbb R^{r_0}}\sup_{v\in\mathbb R^{r_1}}\left\{ \frac12\|u\|^2-\frac 12 \|v\|^2+\inf_{x\in\mathcal K}\left\{\left( \rB_1^{\intercal}v-\rB_0^{\intercal} u \right)^{\intercal}x-\varepsilon\sH(x)  \right\} \right\}
\end{equation}
in the sense that their optimal values coincide and,  if $x^\star$ solves \Eqref{eq:EGWQP} then $(u^\star, v^\star) = (\rB_0 x^\star, \rB_1 x^\star)$ solves \Eqref{eq:variationalForm}. Conversely, if $(u^\star, v^\star)$ solve the inf-sup problem then there exists $x^\star \in \argmin_{x\in\mathcal K}\left\{\left( \rB_1^{\intercal}v-\rB_0^{\intercal} u \right)^{\intercal}x-\varepsilon\sH(x)  \right\}$ which solves \Eqref{eq:EGWQP} and $(u^\star, v^\star) = (\rB_0 x^\star, \rB_1 x^\star)$.  Proposition 8 in \cite{rioux2026discrete} shows that \Eqref{eq:variationalForm} can be written as
    \[
    \inf_{\mathbb R^{r_0}}\ell_{\varepsilon}\text{ for }
        \ell_{\varepsilon}:u\in\mathbb R^{r_0}\mapsto \frac 12 \|u\|^2 + \inf_{x\in\mathcal K}\left\{ \frac 12 x^{\intercal}\rB_1^{\intercal}\rB_1 x -u^{\intercal}\rB_0 x -\varepsilon \mathsf H(x)\right\}, 
    \]
    where $\ell_{\varepsilon}$ is smooth for any $\varepsilon>0$ with gradient  
    \[
        \nabla \ell_{\varepsilon}(u) = u-\rB_0x_{u} 
        \text{ where } x_u\in\argmin_{x\in\mathcal K}\left\{ \frac 12 x^{\intercal}\rB_1^{\intercal}\rB_1 x -u^{\intercal}\rB_0 x -\varepsilon \mathsf H(x)\right\}.
    \]
The same reference proposes to minimize $\ell_\varepsilon$ using projected gradient methods with and without acceleration and derives convergence bounds for the two methods under the inexactness of Sinkhorn's solutions. However, the convergence rates depend inversely on $\varepsilon$ and require setting the stepsize as a function of $\varepsilon$, leading to potentially small stepsizes. 

\begin{algorithm}
\caption{Dual gradient method}
\label{algo:dual}
\SetAlgoLined
\SetAlgoNoEnd
\KwIn{initialization $u_0=\rB_0x_0$ for $x_0\in\mathcal K$, maximum iteration count $J$}
\For{$j = 0, \ldots, J-1$}{
 $x_{u_j}\gets$  approximate solution of  $\min_{x\in\mathcal K}\left\{\frac 12 x^{\intercal}\rB_1^{\intercal}\rB_1x-u_j^{\intercal}\rB_0 x -\varepsilon\mathsf H(x)\right\}$\;

$u_{j+1}\gets u_j-(u_j-\rB_0 x_{u_j})$\; 
}
\end{algorithm}
We now introduce an inexact gradient method (\Cref{algo:dual}) for minimizing $\ell_\varepsilon$ which does not require setting a stepsize. However, each iteration requires solving the strongly convex problem $\min_{x\in\mathcal K}\left\{\frac 12 \|\rB_1x\|^2-u_j^{\intercal}\rB_0 x -\varepsilon\mathsf H(x)\right\}$ at least approximately.  \Cref{algo:md-average} can thus be thought of as taking coarse approximations of the gradient; this can explain why the original \gls*{md} method can fail on some examples, as it takes large steps in a direction which is not necessarily a gradient.  

\Cref{algo:dual} is related to the framework analyzed in \cite{houry2026} for CNT costs. Crucially, a CNT cost $\kappa_i$ on $\mathcal X_i\times \mathcal X_i$ $(i\in\{0,1\})$ is such that $\kappa_i(x,x')=\|\varphi_i(x)-\varphi_i(x')\|_{\mathcal H_i}^2$ where $\mathcal H_i$ is a Hilbert space with norm $\|\cdot\|_{\mathcal H_i}$ and $\varphi_i:\mathcal X_i\to\mathcal H_i$ is a continuous map. With this, the quadratic form $\iint |\kappa_0(x,x')-\kappa_1(y,y')|^2d\pi\otimes \pi(x,y,x',y')$ decomposes as $C(\mu_0,\mu_1)- 8 \|\int \varphi_0(x)\varphi_1(y)^{\intercal}d\pi(x,y)\|_{\mathrm{HS}}^2 -4\int \|\varphi_0(x)\|_{\mathcal H_0}^2 \|\varphi_1(y)\|_{\mathcal H_1}^2d\pi(x,y)$ for any $\pi\in\Pi(\mu_0,\mu_1)$, where $\|\cdot\|_{\mathrm{HS}}$ is the Hilbert-Schmidt norm and $C(\mu_0,\mu_1)$ depends only on moments of the marginals (see Lemma C.1 in \cite{houry2026}). It follows that CNT costs fall within the  variational framework for the special case $\rB_1 = 0$ when the marginals are finitely discrete, but with an added linear term which can be absorbed into the minimization over $\mathcal K$.

\subsection{Convergence under exact solutions}
We first derive convergence bounds for \Cref{algo:dual} assuming that we have access to exact solutions of $\min_{x\in\mathcal K}\bigl\{\frac 12 \|\rB_1x\|^2-u_j^{\intercal}\rB_0 x -\varepsilon\mathsf H(x)\bigr\}$. The proof follows by adapting the result of  \cite{houry2026} for CNT costs to the present dual formulation which holds for arbitrary costs, see \Cref{proof:thm:dual-exact}.

\begin{theorem}[Dual method convergence]\label{thm:dual-exact}
Fix $J\geq 1$, $\varepsilon>0$, and let $(u_j)_{j=0}^{J-1}$ be the iterates from \Cref{algo:dual} assuming that $\min_{x\in\mathcal K}\left\{\frac 12 \|\rB_1x\|^2-u_j^{\intercal}\rB_0 x -\varepsilon\mathsf H(x)\right\}$ can be solved exactly. Then, 
\[
    \min_{j=0}^{J-1} \|\nabla \ell_{\varepsilon}(u_j)\|^2 = 
    \min_{j=0}^{J-1} \|u_j-u_{j+1}\|^2 \leq \frac 2J\left(\ell_{\varepsilon}(u_0)-\inf_{\mathbb R^{r_0}}\ell_{\varepsilon}\right) 
\]
\end{theorem}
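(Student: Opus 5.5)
The plan is to show that the unit-step update in \Cref{algo:dual} is a descent step with sufficient decrease, $\ell_\varepsilon(u_{j+1})\leq \ell_\varepsilon(u_j)-\frac12\|u_j-u_{j+1}\|^2$, and then to telescope. The equality in the statement is immediate. By the gradient formula from Proposition 8 of \cite{rioux2026discrete} recalled above, $\nabla\ell_\varepsilon(u_j)=u_j-\rB_0x_{u_j}$. With exact subproblem solutions the update reads $u_{j+1}=\rB_0x_{u_j}$, so $u_j-u_{j+1}=\nabla\ell_\varepsilon(u_j)$.

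For the descent inequality, I will use the structure $\ell_\varepsilon(u)=\frac12\|u\|^2+g(u)$, where
\[
g(u)\coloneqq\inf_{x\in\mathcal K}\left\{\tfrac12 x^{\intercal}\rB_1^{\intercal}\rB_1x-u^{\intercal}\rB_0x-\varepsilon\sH(x)\right\}.
\]
Since $g$ is a pointwise infimum of functions that are affine in $u$, it is concave. At $u_j$ the infimum is attained at $x_{u_j}$, so evaluating the objective at $x_{u_j}$ for a different $u$ gives the supergradient inequality
\[
g(u)\leq g(u_j)-(\rB_0x_{u_j})^{\intercal}(u-u_j)\quad\text{for all } u.
\]
Applying this at $u=u_{j+1}=\rB_0x_{u_j}$ yields
\[
\ell_\varepsilon(u_{j+1})\leq \tfrac12\|u_{j+1}\|^2+g(u_j)-u_{j+1}^{\intercal}(u_{j+1}-u_j).
\]
A direct expansion gives the identity $\frac12\|u_{j+1}\|^2-u_{j+1}^{\intercal}(u_{j+1}-u_j)=\frac12\|u_j\|^2-\frac12\|u_j-u_{j+1}\|^2$. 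Substituting it produces exactly $\ell_\varepsilon(u_{j+1})\leq\ell_\varepsilon(u_j)-\frac12\|u_j-u_{j+1}\|^2$.

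Notably, this requires no smoothness constant for $g$, which is why no $\varepsilon$-dependent stepsize appears: the negative curvature of $g$ only helps. The only point needing care, and what I regard as the crux, is recognizing this concavity argument. It replaces the usual $L$-smoothness descent lemma, which would give a constant depending on $\varepsilon$. This is the adaptation of the argument in \cite{houry2026} to the present dual formulation.

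Finally, I sum the descent inequality over $j=0,\dots,J-1$ and telescope to get
\[
\tfrac12\sum_{j=0}^{J-1}\|u_j-u_{j+1}\|^2\leq \ell_\varepsilon(u_0)-\ell_\varepsilon(u_J)\leq \ell_\varepsilon(u_0)-\inf_{\mathbb R^{r_0}}\ell_\varepsilon.
\]
The right-hand side is finite: by the equivalence with \Eqref{eq:variationalForm}, $\inf\ell_\varepsilon$ equals the finite value of the \gls*{egw} problem over the compact set $\mathcal K$. Bounding the minimum by the average gives the claimed $\frac2J\bigl(\ell_\varepsilon(u_0)-\inf\ell_\varepsilon\bigr)$.
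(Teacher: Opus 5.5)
Your proposal is correct and follows essentially the same route as the paper. The paper also bounds $\ell_\varepsilon(u_{j+1})$ from above by plugging the minimizer $x_{u_j}$ into the objective at $u_{j+1}=\rB_0x_{u_j}$, which is exactly your supergradient inequality for the concave part $g$; it then derives $\ell_\varepsilon(u_j)-\ell_\varepsilon(u_{j+1})\geq\frac12\|u_j-u_{j+1}\|^2$ by the same algebra and telescopes.
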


\begin{remark}[Comparison with \Cref{prop:md-average-exact}]
\label{rem:dual-criterion}
Compared with \Cref{prop:md-average-exact}, the convergence metric is straightforward; a point which is arbitrarily close to stationary will be met once $J$ is large enough. 
Furthermore, the rate does not depend on $\varepsilon^{-1}$ and so does not deteriorate as $\varepsilon$ becomes small, which is often the region of interest for practitioners. %
However, this algorithm requires a more complicated gradient call as noted previously.%
\end{remark}

\begin{remark}[The case of CNT costs]\label{rem:relation-to-md} As noted previously, the CNT costs analyzed in \cite{houry2026} correspond to the case $\rB_1=0$ in the current setting. In this case, the iterations coincide with the \gls*{md} algorithm, and the analysis is the same as in the aforementioned work. 
\end{remark}

Naturally, a point $u\in\mathbb R^{r_0}$ is called $\eta$-stationary for $\ell_{\varepsilon}$ with $\eta\geq 0$ if $\|\nabla \ell_{\varepsilon}(u)\|\leq \eta$. The notions of primal and dual approximate stationarity are related as follows, see \Cref{proof:thm:primalvsdual}.  
\begin{theorem}[Primal vs. dual stationarity]
\label{thm:primalvsdual} Fix $\varepsilon >0$, $\eta\geq 0$, and suppose that $\bar u\in\mathbb R^{r_0}$ is $\eta$-stationary for $\ell_{\varepsilon}$. Then, $x_{\bar u}$, the unique solution of $\argmin_{x\in\mathcal K}\left\{\frac 12 x^{\intercal}\rB_1^{\intercal}\rB_1x -\bar u^{\intercal}\rB_0 x -\varepsilon\mathsf H(x)\right\}$ is $2\|\rB_0\|_{1,2}\eta$-stationary for the primal problem as defined in \Cref{rem:convergenceCriterion}.
\end{theorem}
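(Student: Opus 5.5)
The plan is to write down the first-order optimality condition of the inner problem defining $x_{\bar u}$, and then compare it with the primal stationarity condition of \Cref{rem:convergenceCriterion}. By that remark, with $f(x)=\frac12 x^{\intercal}\rM x$ and $h=-\varepsilon\mathsf H$, a point $\tilde x\in\mathcal K$ is $\gamma$-stationary if some $p\in\partial h(\tilde x)$ satisfies
\[
\langle -(p+\rM\tilde x),x-\tilde x\rangle\leq\gamma \quad\text{for all } x\in\mathcal K .
\]

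\textbf{Step 1: optimality of $x_{\bar u}$.} The inner objective $\frac12 x^{\intercal}\rB_1^{\intercal}\rB_1x-\bar u^{\intercal}\rB_0x-\varepsilon\mathsf H(x)$ is $\varepsilon$-strongly convex on $\mathcal K$, so $x_{\bar u}$ is unique. Its minimizer has full support, as in the EOT case: the entropy gradient blows up at the boundary while the remaining terms are smooth. Hence $h$ is differentiable at $x_{\bar u}$, and the variational inequality gives $p=\nabla h(x_{\bar u})$ with
\[
\langle p+\rB_1^{\intercal}\rB_1x_{\bar u}-\rB_0^{\intercal}\bar u,\,x-x_{\bar u}\rangle\geq 0\quad\text{for all } x\in\mathcal K .
\]
If full support is delicate to verify, it suffices to take any $p\in\partial h(x_{\bar u})$ supplied by the convex optimality condition, since the normal cone of $\mathcal K$ can be absorbed into $p$ in the same way as in \Cref{sec:Clarke}.

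\textbf{Step 2: insert the primal gradient.} Using $\rM=\rB_1^{\intercal}\rB_1-\rB_0^{\intercal}\rB_0$, write
\[
p+\rM x_{\bar u}=\bigl(p+\rB_1^{\intercal}\rB_1x_{\bar u}-\rB_0^{\intercal}\bar u\bigr)+\rB_0^{\intercal}\bigl(\bar u-\rB_0x_{\bar u}\bigr).
\]
Pair this with $x-x_{\bar u}$. The first bracket gives a nonnegative contribution by Step 1. It follows that
\[
\langle -(p+\rM x_{\bar u}),x-x_{\bar u}\rangle\leq -\langle \nabla\ell_\varepsilon(\bar u),\rB_0(x-x_{\bar u})\rangle\leq \|\nabla\ell_\varepsilon(\bar u)\|\,\|\rB_0(x-x_{\bar u})\|,
\]
where we used $\nabla\ell_\varepsilon(\bar u)=\bar u-\rB_0x_{\bar u}$.

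\textbf{Step 3: bound $\rB_0$.} Interpret $\|\rB_0\|_{1,2}$ as the operator norm from $\ell_1$ to $\ell_2$, i.e. the maximum column norm. Then $\|\rB_0(x-x_{\bar u})\|\leq\|\rB_0\|_{1,2}\|x-x_{\bar u}\|_1$. Both $x$ and $x_{\bar u}$ lie in $\mathcal K\subset\Delta_k$, so $\|x-x_{\bar u}\|_1\leq 2$. Combined with $\|\nabla\ell_\varepsilon(\bar u)\|\leq\eta$, this gives the bound $2\|\rB_0\|_{1,2}\eta$ uniformly in $x\in\mathcal K$, which is the claim.

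\textbf{Main obstacle.} The only delicate point is Step 1: justifying the first-order condition with a genuine (sub)gradient of $h$ matching the definition in \Cref{rem:convergenceCriterion}. I would reuse the interior-support argument from \Cref{sec:Clarke} for EOT-type problems, adapted to include the smooth quadratic term $\frac12\|\rB_1x\|^2$. That term does not affect the boundary behaviour, since its gradient is bounded on $\mathcal K$.
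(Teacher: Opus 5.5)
Your proposal is correct and follows essentially the same route as the paper. It uses the first-order optimality condition of the inner problem at the full-support minimizer $x_{\bar u}$ (justified as in \Cref{sec:Clarke}), the splitting $\rM x_{\bar u}=\bigl(\rB_1^{\intercal}\rB_1x_{\bar u}-\rB_0^{\intercal}\bar u\bigr)+\rB_0^{\intercal}(\bar u-\rB_0x_{\bar u})$, and the bound $\|\rB_0(x-x_{\bar u})\|\leq\|\rB_0\|_{1,2}\|x-x_{\bar u}\|_1\leq 2\|\rB_0\|_{1,2}$, which is exactly the paper's computation (there written by adding and subtracting $\rB_0x_{\bar u}$ inside $\rB_0^{\intercal}\bar u$).
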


\subsection{Convergence under inexact solutions}

Since the solution to $\min_{x\in\mathcal K}\bigl\{\frac 12 \|\rB_1x\|^2-u_j^{\intercal}\rB_0 x -\varepsilon\mathsf H(x)\bigr\}$ is not available exactly  in practice,  we extend \Cref{thm:dual-exact} to the setting where we only have access to an inexact gradient oracle. Complete details can be found in \Cref{proof:thm:dual-inexact}.

\begin{theorem}[Dual convergence under inexactness]\label{thm:dual-inexact}
Fix $\tau>0$ and suppose that the problem  $\min_{x\in\mathcal K}\left\{\frac 12 \|\rB_1x\|^2-u_j^{\intercal}\rB_0 x -\varepsilon\mathsf H(x)\right\}$ can be solved to an accuracy $\tau$, i.e., $\|x_{u_j} - x_{u_j}^\star \|_1 < \tau$ for all $j$ where $x_{u_j}^\star$ denotes the true solution of the problem. Fix $J\geq 1$, $\varepsilon>0$, and let $(u_j)_{j=0}^{J-1}$ be the iterates obtained from \Cref{algo:dual}. Then, 
\[
    \min_{j=0}^{J-1} \|\nabla \ell_{\varepsilon}(u_j)\|^2  \leq \frac{2}{J}\left(\ell_{\varepsilon}(u_0)-\inf_{\mathbb R^{r_0}}\ell_{\varepsilon}\right) + \| \rB_0 \|_{1,2}^2 \tau^2 \text{ where $\| \rB_0 \|_{1,2} = \sup_{\| z\|_1 \leq 1} \|\rB_0 z \|_2$.}
\]
\end{theorem}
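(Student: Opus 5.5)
The plan is to reuse the descent argument behind \Cref{thm:dual-exact} and treat the inexact oracle as a perturbed gradient step. Write $g_j\coloneqq \nabla\ell_\varepsilon(u_j)=u_j-\rB_0x^\star_{u_j}$ for the true gradient and $e_j\coloneqq \rB_0(x_{u_j}-x^\star_{u_j})$ for the oracle error. The iteration of \Cref{algo:dual} then reads $u_{j+1}=u_j-(g_j+e_j)$, with $\|e_j\|_2\le\|\rB_0\|_{1,2}\|x_{u_j}-x^\star_{u_j}\|_1<\|\rB_0\|_{1,2}\tau$ by the definition of the $(1,2)$ operator norm.

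The core ingredient is the one-step inequality from the exact case. Following the \cite{houry2026} argument, I expect the descent inequality $\ell_\varepsilon(u-d)\le \ell_\varepsilon(u)-\nabla\ell_\varepsilon(u)^{\intercal}d+\tfrac12\|d\|^2$ to hold for every $u,d$. The reason is the structure $\ell_\varepsilon(u)=\tfrac12\|u\|^2+\phi(u)$, where $\phi(u)=\inf_{x\in\mathcal K}\{\tfrac12\|\rB_1x\|^2-u^{\intercal}\rB_0x-\varepsilon\sH(x)\}$ is a pointwise infimum of affine functions of $u$ and is therefore concave, with gradient $-\rB_0x_u$. Concavity gives $\phi(u-d)\le\phi(u)+(\rB_0x_u)^{\intercal}d$. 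The quadratic term expands exactly as $\tfrac12\|u-d\|^2=\tfrac12\|u\|^2-u^{\intercal}d+\tfrac12\|d\|^2$. Adding the two yields the inequality, i.e.\ $\ell_\varepsilon$ is $1$-smooth from above. This is presumably exactly what is used in the proof of \Cref{thm:dual-exact}, and it needs no exactness, since the true gradient enters only through $x^\star_{u}$.

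Next I plug in $d=g_j+e_j$, which gives
\[
\ell_\varepsilon(u_{j+1})\le\ell_\varepsilon(u_j)-g_j^{\intercal}(g_j+e_j)+\tfrac12\|g_j+e_j\|^2=\ell_\varepsilon(u_j)-\tfrac12\|g_j\|^2+\tfrac12\|e_j\|^2 .
\]
The cross terms cancel exactly, which is the pleasant feature of the unit step size. Summing over $j=0,\dots,J-1$ and using $\ell_\varepsilon(u_J)\ge\inf\ell_\varepsilon$ gives
\[
\tfrac12\sum_{j}\|g_j\|^2\le \ell_\varepsilon(u_0)-\inf\ell_\varepsilon+\tfrac12 J\|\rB_0\|_{1,2}^2\tau^2 .
\]
Bounding the minimum by the average then yields $\min_j\|g_j\|^2\le\frac2J(\ell_\varepsilon(u_0)-\inf\ell_\varepsilon)+\|\rB_0\|_{1,2}^2\tau^2$, which is the claim.

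The only delicate point is justifying concavity of $\phi$ and the identification of its gradient with $-\rB_0x^\star_u$. Both rest on $x^\star_u$ being the unique minimizer and on $\ell_\varepsilon$ being smooth, which the paper takes from Proposition 8 of \cite{rioux2026discrete}. In fact, only the supergradient inequality $\phi(u')\le\phi(u)-(\rB_0x^\star_u)^{\intercal}(u'-u)$ is needed. It follows directly by evaluating the infimum defining $\phi(u')$ at the feasible point $x^\star_u$, so differentiability is not essential. The requirement $\inf\ell_\varepsilon>-\infty$ is also inherited from the exact theorem, since otherwise the bound is vacuous.
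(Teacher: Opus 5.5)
Your proposal is correct and is essentially the paper's argument: the paper also bounds $\ell_\varepsilon(u_{j+1})$ by evaluating the infimum at the exact minimizer $x^\star_{u_j}$. It then completes the square to get $\ell_\varepsilon(u_j)-\ell_\varepsilon(u_{j+1})\ge \frac12\|\nabla\ell_\varepsilon(u_j)\|^2-\frac12\|\rB_0(x_{u_j}-x^\star_{u_j})\|^2$, which is your one-step bound with $d=g_j+e_j$ written without the intermediate descent lemma, and the summation using $\|e_j\|\le\|\rB_0\|_{1,2}\tau$ proceeds identically.
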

We first describe how to solve these quadratic subproblems and, afterwards, provide more context on the derived convergence rate.
\begin{remark}[Gradient computation] 
    As $\min_{x\in\mathcal K}\left\{\frac 12 \|\rB_1x\|^2-u_j^{\intercal}\rB_0 x -\varepsilon\mathsf H(x)\right\}$ is simply a regularized \gls*{qp}, we may apply dual gradient methods, e.g., Algorithms 5 and 6 in \cite{rioux2026discrete}, to solve it.\footnote{Both of these methods are iterative and only require solving \gls*{eot} problems at each iteration. This can be done in $O(N_0N_1)$ operations (up to $\log$ factors) via Sinkhorn's algorithm (\Cref{algo:sinkhorn}).} In fact, Propositions 11 and 12  of that work already establish how close the iterates get to the true solution so that the conditions of \Cref{thm:dual-inexact} can be shown to hold. However, the number of iterations to get a $\tau$-approximate solution scales as $\varepsilon^{-1}$. It is noteworthy that the proof of \Cref{thm:dual-inexact} does not apply for this convex subproblem as the inequalities are the ``wrong'' direction. 
\end{remark}

\begin{remark}[Approximation error]
    As in \Cref{prop:md-average-inexact}, the rate decouples into a term scaling as $O(1/J)$ and an approximation error which depends on the approximation accuracy $\tau$. The error and overall runtime then depend on the solver used to approximate the gradient. 

    While we cannot directly track $\|\nabla \ell_{\varepsilon}(u_j)\|^2$, $
    \|\nabla \ell_{\varepsilon}(u_j)\|\leq  \|\rB_0(x_{u_j}-x^{\star}_{u_j})\| + \|u_{j}-\rB_0 x_{u_j}\|\leq \|\rB_0\|_{1,2}\tau +\|u_{j+1}-u_j\|$
    so that terminating the algorithm when the approximate gradient $u_{j+1}-u_j=-(u_{j}-\rB_0 x_{u_j})$ has small magnitude is reasonable.
\end{remark}

\section{Experiments}\label{sec:experiments}

In this section we empirically validate the convergence rates derived in \Cref{sec:algorithm} \Cref{sec:dual-method}  across a variety of different settings. We also study the performance of these algorithms on a reconstruction of an example from \cite{houry2026}, where the original \gls*{md} algorithm fails to converge for a cost which is not CNT.

In the sequel, the implementation of the dual gradient method is identical to Algorithm 4 in \cite {rioux2026discrete} with the stepsize taken to $1$. We call this algorithm Variational (1) to emphasize this fact. The regularized convex quadratic subproblems used to approximate the gradients at each iteration are solved using Algorithm 6 from that work with the stepsize chosen per experiment. %
 We stress, however, that any other solver could be used to solve this inner problem as long as it can be shown to satisfy  $\|x_{u_j} - x^\star_{u_j} \|_1 < \tau$ for some $\tau > 0$.  We also compare Variational (1) to Algorithms 2 and 4 in \cite{rioux2026discrete} with theoretical stepsizes computed from a Lipschitz constant $L$ which we hence refer to as Accelerated variational ($L$) and Variational ($L$). The regularized quadratic subproblems are solved in the same manner. We report the values of $L$ per experiment in \Cref{app:theoretical-step}.

\begin{figure}[!htb]
    \centering
    \includegraphics[width=0.9\linewidth]{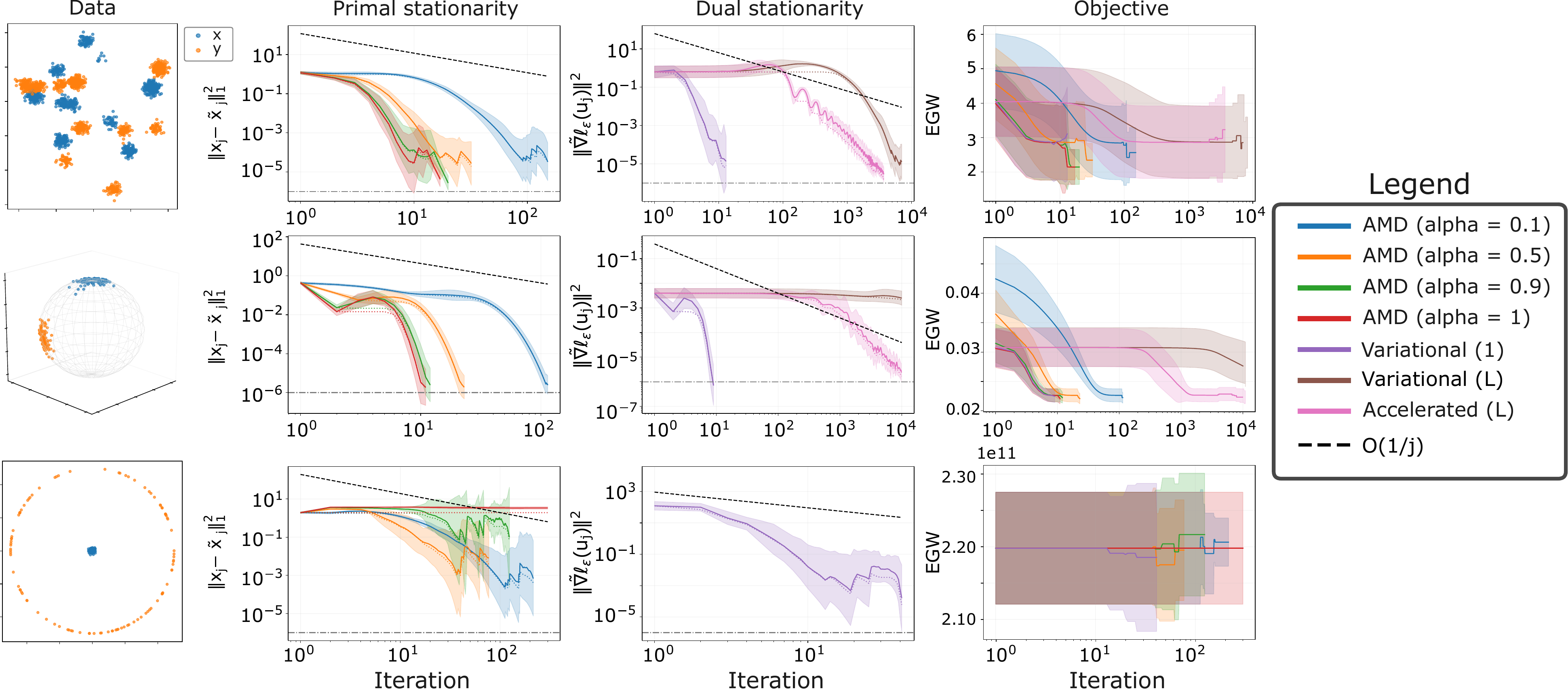}
    \caption{ Convergence of \gls*{amd} (\Cref{algo:md-average}) for $\alpha\in\{0.1,0.5,0.9,1\}$ ($\alpha=1$ recovering classical \gls*{md}), and the dual gradient method (\Cref{algo:dual}) across three datasets, which are visualized in the left-most panel. Primal stationarity records  $\|x_j-\tilde x_j\|_1^2$ for \gls*{amd}. Dual stationarity records $\|u_j-u_{j+1}\|^2$ for simple variational methods: Variational (1) and Variational ($L$), and $\|u_j-z_j\|^2$ for Accelerated ($L$) where $z_j$ is a second sequence used for momentum (see Algorithm 2 in \cite{rioux2026discrete}). The right-most plot shows the corresponding \gls*{egw} value. In both stationarity panels, solid lines show the per-iteration value and dotted lines its running minimum, with the $O(1/j)$ rate overlaid in black. Solid lines report the geometric mean for stationarity and the arithmetic mean for the objective over $10$ seeds; shaded regions indicate one standard deviation in log and non-log space respectively.}
    
    \label{fig:3-datasets}
\end{figure}

\begin{table}[htbp]
\centering
\small
\begin{tabular}{lcccccc}
\toprule
& \multicolumn{2}{c}{Gaussian mixture} & \multicolumn{2}{c}{Sphere} & \multicolumn{2}{c}{Gaussian-ring} \\
\cmidrule(lr){2-3} \cmidrule(lr){4-5} \cmidrule(lr){6-7}
Method & Runtime (s) & Conv. & Runtime (s) & Conv. & Runtime (s) & Conv. \\
\midrule
AMD ($\alpha=0.1$)   & $38.2 \pm 13.1$      & $10/10$ & $0.62 \pm 0.49$    & $10/10$ & $9.42 \pm 3.39$   & $8/10$ \\
AMD ($\alpha=0.5$)   & $7.69 \pm 2.74$      & $10/10$ & $0.117 \pm 0.090$  & $10/10$ & $4.66 \pm 3.76$   & $9/10$ \\
AMD ($\alpha=0.9$)   & $4.19 \pm 1.55$      & $10/10$ & $0.061 \pm 0.048$  & $10/10$ & $7.85 \pm 6.18$   & $6/10$ \\
AMD ($\alpha=1$)     & $3.66 \pm 1.41$      & $10/10$ & $0.054 \pm 0.042$  & $10/10$ & $15.10 \pm 3.19$  & $0/10$ \\
Variational (1)      & $9.06 \pm 2.94$      & $10/10$ & $0.128 \pm 0.113$  & $10/10$ & $106.1 \pm 57.8$  & $10/10$ \\
Variational ($L$)    & $2366 \pm 694$       & $8/10$  & $53 \pm 58$        & $0/10$  & ---               & --- \\
Accelerated variational ($L$) & $1155 \pm 300$ & $10/10$ & $92 \pm 74$   & $7/10$  & ---               & --- \\
\bottomrule
\end{tabular}
\caption{Mean solver wall-clock runtime in seconds ($\pm$ one standard deviation) over 10 seeds as well as the number of runs that reached the prescribed tolerance before reaching the maximum number of iterations. The measured runtime does not include the set up needed for the methods and in particular does not take into account the cost decompositions needed for both methods (for details see \Cref{app:implementation}).}
\label{tab:runtime}
\end{table}

Firstly, we illustrate convergence of \gls*{amd} for $\alpha\in\{0.1,0.5,0.9,1\}$ as well as the dual gradient method in three settings: a 10-component Gaussian mixture with $x\in\mathbb R^{10}$ and $y\in\mathbb R^{15}$ under the squared Euclidean cost which is CNT, following \cite{scetbon2022linear}, rotated point clouds on a sphere under the geodesic cost which is also CNT, and a Gaussian mapped to a ring both centered at $0$ under Euclidean cost raised to the sixth power which is not CNT. We use fixed values of $\alpha$ rather than the step sizes required by \Cref{prop:md-average-exact,prop:md-average-inexact} as they scale as $\varepsilon/\|\rM\|_{\max}$ (similarly to the variational methods of \citet{rioux2026discrete}) and are prohibitively small in all three settings. We report their values together with the corresponding stationarity plots in \Cref{app:theoretical-step} for completeness. 

The entries of $\mu_0,\mu_1$ are drawn uniformly at random and normalized to sum to one. In the sphere example $\mu_1$ is assigned the same weights as $\mu_0$, so that the two spaces are isomorphic. We set $\varepsilon=0.1$ for the Gaussian mixture, $\varepsilon=0.01$ for the sphere and $\varepsilon=1$ for the Gaussian-to-ring example. See \Cref{app:experimental-details} for dataset generation and other experimental details. 

\Cref{fig:3-datasets} shows the primal approximate stationarity $\|x_j-\tilde x_j\|_1^2$ for \gls*{amd}, the approximate dual gradient $\tilde\nabla\ell(u_j)$ for the variational methods, and the \gls*{egw} value across iterations. In the two CNT examples, \gls*{amd} with $\alpha<1$ converges more slowly than classical \gls*{md}, although $\alpha=0.5$ and $0.9$ require a comparable number of iterations. In the non-CNT Gaussian-ring example, classical \gls*{md} fails to converge in $300$ iterations for all 10 seeds whereas the smaller values of $\alpha$ are seen to converge on at least some of the runs. By contrast, Variational (1) converges for all experiments albeit slower than AMD. These methods are seen to abide by the theoretical $O(1/j)$ rate when they converge. %
As for the dual gradient methods, Variational~(1) is seen to converge much faster than Variational~(L) and its accelerated counterpart, as the stepsize $1/L$ used for those methods is very small. This also explains why those methods fail to converge in $10\,000$ iterations for some of the CNT examples; we thus omit these methods for the non-CNT example as they converge too slowly.

We underscore that all methods attain similar objective values. The large EGW values obtained in the Gaussian-ring example are due to the fact that the sixth-power cost produces a large constant term that hides the decrease of the (effective) objective. Since only the interaction term depends on the coupling (\Cref{app:interaction-objective}), we plot it separately in \Cref{fig:interaction-objective}, where both the decrease and the oscillations of classical \gls*{md} are clearly visible. The dual methods require a similar number of iterations to \gls*{amd} with $\alpha\in\{0.9,1\}$ on the CNT examples, and fewer than every \gls*{amd} variant on the non-CNT example. However, each iteration requires a more complex inner solve, so their total runtime is consistently higher (\Cref{tab:runtime}). We report the average number of Sinkhorn calls in \Cref{tab:sinkhorn-calls} and discuss the complexity of the methods in \Cref{app:dual-implementation}. 
Finally, we note that the same stopping tolerance is applied to different quantities for the primal and dual methods, so their iteration counts are not directly comparable.

\begin{figure}[t]
    \centering
    \includegraphics[width=.9\linewidth]{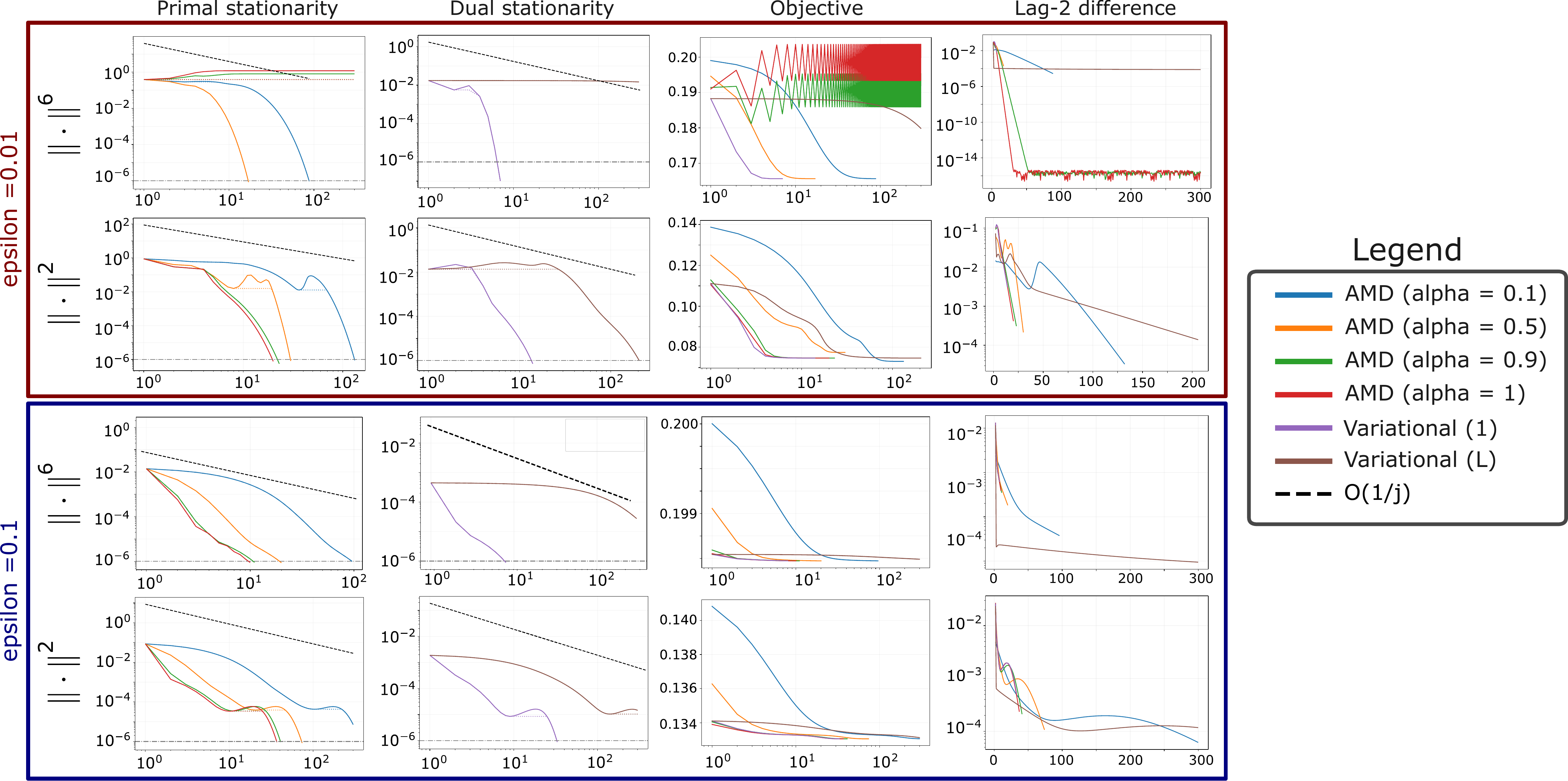}
    \caption{Results for the non-CNT example adapted from \cite{houry2026}. The plots track primal stationarity ($\|x_j-\tilde{x}_j\|_1^2$) for \gls*{amd} for $\alpha \in \{0.1, 0.5, 0.9, 1\}$, dual stationarity ($\|u_j - u_{j+1} \|^2$) for Variational (1) and Variational (L), the \gls*{egw} value and the lag-two difference between couplings at each iteration i.e. $\|x_j - x_{j+2}\|$ for all methods. The dotted line denotes the $O(1/j)$ reference curve. The top block shows the results for $\varepsilon=0.01$ and the bottom for  $\varepsilon=0.1$.}
    \label{fig:diagnostics-main}
\end{figure}

Next, we consider a reconstruction of the example introduced by
\cite{houry2026}, which compares squared Euclidean costs, which are
CNT, with sixth-power Euclidean costs, which are not CNT. Their example illustrates that the classical \gls*{md} iteration can oscillate between
two distinct couplings for non-CNT costs. We use two uniformly weighted point clouds containing 15 points each and study two regularization regimes, $\varepsilon=0.1 $ and $\varepsilon=0.01$. 
The details of the point cloud generation and implementation are described in \Cref{app:experimental-settings}.

\Cref{fig:diagnostics-main} reports the primal approximate stationarity for \gls*{amd}, the dual stationarity for Variational (1) and Variational ($L$), the \gls*{egw} value, and the lag-2 difference
$\|x_j-x_{j+2}\|_F$ for the various problems. When $\varepsilon=0.01$ with the sixth-power
cost, the objective oscillates for $\alpha\in\{0.9,1\}$, whereas it decreases
monotonically for $\alpha\in\{0.1,0.5\}$ and for both dual methods. For the
oscillating runs, the stationarity proxy remains bounded away from zero while the lag-2 difference becomes 
negligible, showing that the couplings oscillate with cycle two. This can be also seen in \Cref{app:additional-results}, where we visualize the different couplings. Every other run in \Cref{fig:diagnostics-main} decreases monotonically and meets the stopping criterion, including all runs at $\varepsilon=0.1$: the oscillation is therefore a joint effect of a
non-CNT cost and small regularization value. 

\section{Proofs of main results}\label{app:proofs}
First, we establish the strong convexity of the negative entropy on the simplex
$\Delta_k:=\{x\in\R^k: x\geq 0,\lVert x\rVert_1=1\}$ which, notably, contains $\cK$.

\begin{lemma}[Strong convexity of negative entropy]\label{lem:convexity-of-entropy} $-\mathsf H$ is
$1$-strongly convex on $\Delta_k$ with respect to $\lVert\cdot\rVert_1$. Thus, 
$h=-\varepsilon\mathsf H$ is $\varepsilon$-strongly convex on $\Delta_k$.
\end{lemma}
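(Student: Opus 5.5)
The plan is to reduce strong convexity to Pinsker's inequality through the Bregman divergence of $-\mathsf H$. For $x,y\in\Delta_k$ with $y$ in the relative interior (all $y_i>0$), differentiate $-\mathsf H(x)=\sum_i x_i\log x_i$ to get $\nabla(-\mathsf H)(y)_i=\log y_i+1$. The Bregman divergence is then
\[
D(x,y)= -\mathsf H(x)+\mathsf H(y)-\langle \nabla(-\mathsf H)(y),x-y\rangle=\sum_i x_i\log\frac{x_i}{y_i}-\sum_i (x_i-y_i).
\]
On the simplex the last sum vanishes, since $\sum_i x_i=\sum_i y_i=1$. Hence $D(x,y)=\mathrm{KL}(x\|y)$.

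Pinsker's inequality gives $\mathrm{KL}(x\|y)\geq \tfrac12\|x-y\|_1^2$. This is exactly the first-order characterization of $1$-strong convexity with respect to $\|\cdot\|_1$:
\[
-\mathsf H(x)\geq -\mathsf H(y)+\langle\nabla(-\mathsf H)(y),x-y\rangle+\tfrac12\|x-y\|_1^2 .
\]
If one prefers to avoid citing Pinsker, an alternative is a second-order argument. For $y$ interior and a direction $z$ with $\sum_i z_i=0$, Cauchy--Schwarz gives
\[
\|z\|_1^2=\Bigl(\sum_i \frac{|z_i|}{\sqrt{y_i}}\sqrt{y_i}\Bigr)^2\leq \sum_i \frac{z_i^2}{y_i}=z^{\intercal}\nabla^2(-\mathsf H)(y)z,
\]
using $\sum_i y_i=1$. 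Integrating along segments then yields the same inequality.

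The main obstacle is the boundary of $\Delta_k$, where $\log$ is not differentiable, so the gradient inequality above is only available for interior $y$. To handle this, I would state strong convexity in the gradient-free form: for $x,y\in\Delta_k$ and $t\in[0,1]$,
\[
-\mathsf H(tx+(1-t)y)\leq -t\mathsf H(x)-(1-t)\mathsf H(y)-\tfrac{t(1-t)}{2}\|x-y\|_1^2 .
\]
For interior points, this follows from the gradient inequality by applying it at $y_t=tx+(1-t)y$ toward both $x$ and $y$ and taking the convex combination with weights $t$ and $1-t$. For general $x,y\in\Delta_k$, I would approximate them by interior points, e.g.\ $x^{(n)}=(1-\tfrac1n)x+\tfrac1n\tfrac{\1}{k}$ and similarly $y^{(n)}$. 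Since $\mathsf H$ is continuous on the closed simplex (with $0\log 0=0$) and the norm is continuous, letting $n\to\infty$ gives the inequality on all of $\Delta_k$.

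Finally, multiplying this inequality by $\varepsilon>0$ shows that $h=-\varepsilon\mathsf H$ is $\varepsilon$-strongly convex on $\Delta_k$.
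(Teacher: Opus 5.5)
Your proof is correct and is essentially the paper's argument. Summing your two Bregman identities at $y_t=tx+(1-t)y$ with weights $t$ and $1-t$ gives exactly the identity the paper starts from, $-t\mathsf H(x)-(1-t)\mathsf H(y)=t\,\mathrm{KL}(x\Vert y_t)+(1-t)\,\mathrm{KL}(y\Vert y_t)-\mathsf H(y_t)$, and both proofs then conclude with Pinsker. The paper writes this identity directly without gradients; since $x,y\ll y_t$ automatically, it holds on all of $\Delta_k$, so your interior-approximation step is valid but unnecessary.
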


\begin{proof}
Fix $t\in(0,1)$ and $x,x'\in\Delta_k$, and write $m:=tx+(1-t)x'$ which again lies in $\Delta_k$ by convexity. Expanding the definitions of
$\mathsf H$ and of the KL divergence gives the identity
\[
-t\mathsf H(x)-(1-t)\mathsf H(x')
= t\,\mathrm{KL}(x\Vert m)+(1-t)\,\mathrm{KL}(x'\Vert m)-\mathsf H(m).
\]
Applying Pinsker's inequality gives,
\[
t\,\mathrm{KL}(x\Vert m)+(1-t)\,\mathrm{KL}(x'\Vert m)
\ \ge\ \tfrac12\Big(t\lVert x-m\rVert_1^{2}+(1-t)\lVert x'-m\rVert_1^{2}\Big),
\]
and since $x-m=(1-t)(x-x')$ and $x'-m=-t(x-x')$, the right-hand side equals
\[
\tfrac12\Big(t(1-t)^{2}+(1-t)t^{2}\Big)\lVert x-x'\rVert_1^{2}
=\tfrac12\,t(1-t)\lVert x-x'\rVert_1^{2}.
\]
Combining the two displays,
\[
-\mathsf H\big(tx+(1-t)x'\big)\ \le\ -t\mathsf H(x)-(1-t)\mathsf H(x')
-\tfrac12t(1-t)\lVert x-x'\rVert_1^{2},
\]
which proves $1$-strong convexity of $-\mathsf H$ on $\Delta_k$.
\end{proof}

\subsection{Proof of \Cref{prop:md-average-exact}}\label{app:md-exact-proof}

The proof will follow by applying the results from \cite{ghadimi2019conditional}. As noted
previously in \Cref{sec:algorithm}, \Cref{algo:md-average} coincides with the conditional
gradient type method studied in that work for the choices $X=\cK$,
$f(x)=\frac12x^{\intercal}\rM x$, and $h(x)=-\varepsilon\mathsf H(x)$. We note that the gradient
of $f$ satisfies
\[
\|\nabla f(x)-\nabla f(x')\|_\infty=\|\rM(x-x')\|_{\infty}
\leq\max_{i,j}|\rM_{i,j}|\,\|x-x'\|_1
\]
so that the gradient of $f$ is $1$-H\"older continuous (i.e., Lipschitz continuous) with respect
to $\lVert\cdot\rVert_{\infty}$ with constant $\maxnorm{\rM}\coloneqq\max_{i,j}|\rM_{i,j}|$. Moreover, by
Lemma~\ref{lem:convexity-of-entropy}, $h$ is $\varepsilon$-strongly convex on $\cK\subset\Delta_k$
with respect to $\|\cdot\|_{1}$, so that $\mu=\varepsilon$.

By Theorem 1 (b) in \cite{ghadimi2019conditional}, with
$\alpha_j=\min\left\{\frac{\varepsilon}{4\maxnorm{\rM}},1\right\}$,
\[
\sum_{j=0}^{J-1}\alpha_j\left(\nabla f(x_j)^{\intercal}(x_j-\tilde x_j)+h(x_j)-h(\tilde
x_j)\right)\leq\frac43\left(f(x_0)+h(x_0)-\min_{\cK}(f+h)\right).
\]
By Lemma 1 from the same reference,
\[
\|x_j-\tilde x_j\|_1^2\leq\frac{2}{\varepsilon}\left(\nabla f(x_j)^{\intercal}(x_j-\tilde
x_j)+h(x_j)-h(\tilde x_j)\right),
\]
so that, using the fact that $\alpha_j$ is constant,
\[
\min_{j=0}^{J-1}\|x_j-\tilde x_j\|_1^2\leq
\frac{8}{3J\varepsilon} \frac{1}{\alpha_j}
\left(f(x_0)-\varepsilon\mathsf H(x_0)-\min_{\cK}\{f-\varepsilon\mathsf H\}\right),
\]
which proves the claim upon inserting the expression for $\alpha_j^{-1}$.
\qed

\subsection{Proof of \Cref{prop:md-average-inexact}}\label{app:proof-inexact}
As noted in \Cref{sec:Sinkhorn}, the output of Sinkhorn's algorithm can be identified with an element of $\Delta_k$ upon vectorizing the matrix $\bm \Pi$, that is, $\tilde x_j\in\Delta_k$. It follows that the iterates $x_j$ of \Cref{algo:md-average} are all elements of $\Delta_k$ due to convexity of $\Delta_k$.

Throughout we write $\phi_{x_j}(x)=\nabla f(x_j)^{\intercal}x-\varepsilon\mathsf H(x)$ for the function
minimized in the exact update, $\tilde x_j^{\star}:=\argmin_{\cK}\phi_{x_j}$ and $L:=\maxnorm{\rM}$, so that $\tilde x_j$ is the
approximation of $\tilde x_j^{\star}$ returned by Sinkhorn's algorithm.

We now establish that $\phi_{x_j}$ admits a certain modulus of continuity over $\Delta_k$ which is a key ingredient in the proof of \Cref{prop:md-average-inexact}.

\begin{lemma}[Modulus of continuity] 
\label{lem:modulusContinuity}
    For any $x_j$ with $\|x_j\|_1=1$ and $x,x'\in \Delta_k$, 
    \[
    |\phi_{x_j}(x)-\phi_{x_j}(x')|\leq \omega_{\varepsilon}(\|x-x'\|_1) \text{ and } |(f(x)-\varepsilon\mathsf H(x))-(f(x')-\varepsilon\mathsf H(x'))|\leq \omega_{\varepsilon}(\|x-x'\|_1)
    \]
    where $\omega_\varepsilon:[0,2]\to \mathbb R$ is defined by 
    \[
    \omega_\varepsilon(p)=
        \maxnorm{\rM}p+ \varepsilon\left[\frac{1}{2} p\log(k-1)- \frac{1}{2}p\log\left(\frac{1}{2}p\right)- \left(1-\frac{1}{2}p\right)\log\left(1-\frac{1}{2}p\right)\right],
    \]
    which is increasing on $[0,1]$.
\end{lemma}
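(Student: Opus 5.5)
The bound splits into a linear part and an entropy part, and I will bound each separately. For the linear part of $\phi_{x_j}$, note $|\nabla f(x_j)^{\intercal}(x-x')| = |x_j^{\intercal}\rM(x-x')|\leq \|\rM x_j\|_\infty\|x-x'\|_1\leq \maxnorm{\rM}\|x_j\|_1\|x-x'\|_1=\maxnorm{\rM}\|x-x'\|_1$.

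For $f$ itself, write $f(x)-f(x')=\tfrac12(x+x')^{\intercal}\rM(x-x')$, which uses the symmetry of $\rM$. Since $\|\tfrac12(x+x')\|_1=1$ on $\Delta_k$, the same H\"older argument gives $|f(x)-f(x')|\leq\maxnorm{\rM}\|x-x'\|_1$. Hence both claimed inequalities reduce to the entropy continuity bound
\[
|\mathsf H(x)-\mathsf H(x')|\leq \tfrac12 p\log(k-1)+\mathsf h_b(p/2),\qquad p=\|x-x'\|_1,
\]
where $\mathsf h_b(s)=-s\log s-(1-s)\log(1-s)$ is the binary entropy. Multiplying this by $\varepsilon$ and adding the $\maxnorm{\rM}p$ term gives exactly $\omega_\varepsilon(p)$.

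This is the Fannes--Audenaert sharp continuity bound for Shannon entropy in terms of total variation distance $t=p/2$, namely $|\mathsf H(x)-\mathsf H(x')|\leq t\log(k-1)+\mathsf h_b(t)$ for $t\leq 1-1/k$. If citing it is not enough, I would prove it by a coupling argument. Take a maximal coupling $(X,X')$ of $x,x'$ with $\mathbb P(X\neq X')=t$, and let $E=\1\{X\neq X'\}$. Then
\[
\mathsf H(X)\leq \mathsf H(X,E)\leq \mathsf H(X')+\mathsf H(E)+\mathsf H(X\mid X',E).
\]
Given $E=0$ we have $X=X'$, so that term vanishes. Given $E=1$, $X$ ranges over at most $k-1$ values, so $\mathsf H(X\mid X',E)\leq t\log(k-1)$. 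Finally $\mathsf H(E)=\mathsf h_b(t)$, and exchanging the roles of $x$ and $x'$ gives the absolute value. This is the classical Fano-type argument and avoids any restriction on $t$.

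I expect the main obstacle to be the edge behaviour of the bound. For $p\in(1,2]$ the function $\mathsf h_b(p/2)$ is decreasing, which is why monotonicity is only claimed on $[0,1]$. The coupling proof gives the bound for all $t\in[0,1]$ regardless, so the inequality holds on all of $[0,2]$.

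For the monotonicity claim, differentiate:
\[
\omega_\varepsilon'(p)=\maxnorm{\rM}+\tfrac{\varepsilon}{2}\Bigl[\log(k-1)+\log\tfrac{1-p/2}{p/2}\Bigr].
\]
For $p\leq1$ we have $(1-p/2)/(p/2)\geq1$, and $\log(k-1)\geq0$ for $k\geq2$. So $\omega_\varepsilon'\geq0$ on $(0,1]$, and $\omega_\varepsilon$ is continuous at $0$, which proves it is increasing on $[0,1]$.
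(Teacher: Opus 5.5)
Your proposal is correct and takes essentially the same route as the paper. Both use H\"older's inequality with $\|\rM y\|_\infty\le\maxnorm{\rM}\|y\|_1$ for the linear and quadratic terms (your symmetric rewriting $\tfrac12(x+x')^{\intercal}\rM(x-x')$ is equivalent to the paper's two-term split), the Audenaert sharp continuity bound for the entropy, and monotonicity of the binary entropy on $[0,1/2]$. The one difference is that you also give a self-contained maximal-coupling/Fano derivation of the entropy bound, valid for all $t\in[0,1]$, where the paper simply cites Audenaert's inequality.
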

\begin{proof}
     Equation (11) in \cite{audenaert2007sharp} asserts that 
    \[
    \begin{aligned}
        \left|\mathsf H(x)-\mathsf H(x')\right|&\leq \frac{1}{2}\|x-x'\|_1\log(k-1)- \frac{1}{2}\|x-x'\|_1\log\left(\frac{1}{2}\|x-x'\|_1\right)
        \\
        &- \left(1-\frac{1}{2}\|x-x'\|_1\right)\log\left(1-\frac{1}{2}\|x-x'\|_1\right)
    \end{aligned} 
    \]
    for each non-negative vector with $1$-norm equal to $1$ of size $k$, noting that $\|x-x'\|_1\leq 2$. On the other hand, we have that 
    \[
        |\nabla f(x_j)^{\intercal} x -\nabla f(x_j)^{\intercal} x'| = |x_j^{\intercal}\rM( x - x')|\leq \|\rM x_j\|_{\infty} \|x-x'\|_1\leq \maxnorm{\rM}\|x-x'\|_1. 
    \]
    Altogether we have that 
    \[
    \begin{aligned}
        |\phi_{x_j}(x)-\phi_{x_j}(x')|&\leq\maxnorm{\rM}\|x-x'\|_1+   \frac{\varepsilon}{2}\|x-x'\|_1\log(k-1)- \frac{\varepsilon}{2}\|x-x'\|_1\log\left(\frac{1}{2}\|x-x'\|_1\right)
        \\
        &- \varepsilon\left(1-\frac{1}{2}\|x-x'\|_1\right)\log\left(1-\frac{1}{2}\|x-x'\|_1\right)
        \\&=\omega_\varepsilon(\|x-x'\|_1)
    \end{aligned} 
    \]
    which proves the first claim. Observe that 
    \[
    \begin{aligned}
        |f(x)-f(x')|=\left|\frac{1}{2}(x-x')^{\intercal}\rM x +\frac{1}{2}(x')^{\intercal}\rM (x-x') \right|&\leq \frac{1}{2}\left(\|\rM x\|_{\infty}+\|\rM x'\|_{\infty}\right)\|x-x'\|_1
        \\
        &\leq \maxnorm{\rM}\|x-x'\|_1
    \end{aligned} 
    \]
   so that the same argument as above establishes that 
   \[
         |(f(x)-\varepsilon\mathsf H(x))-(f(x')-\varepsilon\mathsf H(x'))|\leq \omega_{\varepsilon}(\|x-x'\|_1).
   \]
     
    To see that $\omega_\varepsilon$ is increasing on $[0,1]$, note that $p\in[0,1]\mapsto -p\log(p)-(1-p)\log(1-p)$ is increasing on $[0,1/2]$ and that the linear terms are evidently increasing. 
\end{proof}

With these results in hand, we are in position to prove \Cref{prop:md-average-inexact}.

\noindent\textbf{Proof of \Cref{prop:md-average-inexact}}
Since the gradient of $f$ is $L$-Lipschitz continuous with respect to $\|\cdot\|_1$ and
$x_{j+1}-x_j=\alpha(\tilde x_j-x_j)$,
\[
f(x_{j+1})\le f(x_j)+\alpha\nabla f(x_j)^{\intercal}(\tilde x_j-x_j)
+\frac{L\alpha^{2}}{2}\|\tilde x_j-x_j\|_1^{2}.
\]
As noted above $\tilde x_j\in\Delta_k$ and, since $x_0 \in \cK \subset \Delta_k$, $x_j \in \Delta_k$ for all $j$. 
Lemma~\ref{lem:convexity-of-entropy} yields that
\[
-\varepsilon\mathsf H(x_{j+1})\le-(1-\alpha)\varepsilon\mathsf H(x_j)-\alpha\varepsilon\mathsf H(\tilde x_j)
-\frac{\alpha(1-\alpha)}{2}\varepsilon\|x_j-\tilde x_j\|_1^{2}.
\]
Summing these two displayed equations, we obtain that
\[
\begin{aligned}
\alpha\left(\nabla f(x_j)^{\intercal}(x_j-\tilde x_j)+\varepsilon\mathsf H(\tilde x_j)
-\varepsilon\mathsf H(x_j)\right)&
\\&\hspace{-15em}\leq
\left(f(x_j)-\varepsilon\mathsf H(x_j)\right)-\left(f(x_{j+1})-\varepsilon\mathsf H(x_{j+1})\right)
+\frac{\alpha}{2}\left(L\alpha-(1-\alpha)\varepsilon\right)\|x_j-\tilde x_j\|_1^{2},
\end{aligned}
\]
and, since $\alpha=\frac{\varepsilon}{2(L+\varepsilon)}$,
$L\alpha-(1-\alpha)\varepsilon=-\varepsilon/2$, so that
\begin{equation}
\label{eq:convergenceMainBound}
\begin{aligned}
\frac{\alpha\varepsilon}{4}\|x_j-\tilde x_j\|_1^{2}
+\alpha\left(\nabla f(x_j)^{\intercal}(x_j-\tilde x_j)+\varepsilon\mathsf H(\tilde x_j)
-\varepsilon\mathsf H(x_j)\right)
&\\&\hspace{-8em}
\leq
\left(f(x_j)-\varepsilon\mathsf H(x_j)\right)-\left(f(x_{j+1})-\varepsilon\mathsf H(x_{j+1})\right).
\end{aligned}
\end{equation}
We now analyze the term $\nabla f(x_j)^{\intercal}(x_j-\tilde x_j)+\varepsilon\mathsf H(\tilde x_j)
-\varepsilon\mathsf H(x_j) = \phi_{x_j}(x_j)-\phi_{x_j}(\tilde x_j)$. Let $\tilde x_j^{\star}$ be the true minimizer of $\phi_{x_j}$ over $\mathcal K$ and 
let $z_{j+1} = (1-\alpha) z_j +\alpha \tilde x_j^{\star}\in\mathcal K$ for each $j\in\{0,\dots, J-1\}$ with the convention $z_0=x_0$. We prove by induction that $\|z_J-x_J\|_1\leq (e^{\delta}-1)\sum_{j=0}^{J-1}(1-\alpha)^j\alpha$. The base case is straightforward since $z_0=x_0$. Now, by the triangle inequality 
\[
\begin{aligned}
    \|z_{J+1}-x_{J+1}\|_1 &= \|(1-\alpha)(z_{J}-x_J)+\alpha(\tilde x_J^{\star}-\tilde x_{J})\|_1
    \\
    &\leq (e^{\delta}-1) \alpha (1-\alpha)\sum_{j=0}^{J-1}(1-\alpha)^j +\alpha (e^{\delta}-1)=(e^{\delta}-1)\alpha  
    \sum_{j=0}^{J}(1-\alpha)^j,
\end{aligned}
\]
where we have applied Lemma \ref{lem:convSinkhorn} to obtain that $\|\tilde x_J^{\star}-\tilde x_{J}\|_1\leq e^{\delta}-1$.

With these points, we note that 
\[
    \phi_{x_j}(x_j)- \phi_{x_j}(\tilde x_j) = \underbrace{\phi_{x_j}(x_j)- \phi_{x_j}(z_j)}_{(i)}+\underbrace{\phi_{x_j}(z_j)-\phi_{x_j}(\tilde x_j^{\star})}_{(ii)}+\underbrace{\phi_{x_j}(\tilde x_j^{\star})- \phi_{x_j}(\tilde x_j)}_{(iii)}.  
\]
Note that expression (ii) is non-negative since $\tilde x_j^{\star}$ is the global minimizer of $\phi_{x_j}$. On the other hand, since $x_j,z_j,\tilde x_j^{\star},$ and $\tilde x_j$ are elements of $\Delta_k$ as noted previously, Lemma \ref{lem:modulusContinuity} yields that 
\[
   \phi_{x_j}(x_j)- \phi_{x_j}(\tilde x_j)\geq -\omega_\varepsilon(\|x_j-z_j\|_1) - \omega_\varepsilon(\|\tilde x_j^{\star}-\tilde x_j\|_1)\geq -2\omega_\varepsilon(e^{\delta}-1), 
\]
where the final inequality follows from the fact that $\omega_\varepsilon$ is increasing on $[0,1]$ and that $e^{\delta}-1\leq \frac{1}{2}$.

Returning to \Eqref{eq:convergenceMainBound}, we obtain that  
\[
    \sum_{j=0}^{J-1} \|x_j-\tilde x_j\|_1^2\leq\frac{4}{\alpha \varepsilon}\left(f(x_0)-\varepsilon\mathsf H(x_0)-\left(f(x_{J})-\varepsilon\mathsf H(x_{J})\right)\right) +\frac{8}{\varepsilon}J \omega_\varepsilon(e^{\delta}-1). 
\]
Recalling $z_J\in\mathcal K$ from the previous step which satisfies $\|x_J-z_J\|_1\leq e^{\delta}-1$, we apply Lemma \ref{lem:modulusContinuity} again to obtain that
\[
f(x_J)-\varepsilon \mathsf H(x_J)\geq f(z_J)-\varepsilon \mathsf H(z_J) -\omega_\varepsilon(e^{\delta}-1)\geq \inf_{\mathcal K}\{f-\varepsilon\mathsf H\} -\omega_\varepsilon(e^{\delta}-1)
\]
whereby
\[
    \sum_{j=0}^{J-1} \|x_j-\tilde x_j\|_1^2\leq\frac{4}{\alpha \varepsilon}\left(f(x_0)-\varepsilon\mathsf H(x_0)-\inf_{\mathcal K}\{f-\varepsilon\mathsf H\} +\omega_\varepsilon(e^{\delta}-1)\right) +\frac{8}{\varepsilon}J \omega_\varepsilon(e^{\delta}-1). 
\]
We conclude by lower bounding the sum on the left hand side by $J$ times the minimum summand,
\[
    \min_{j=0}^{J-1} \|x_j-\tilde x_j\|_1^2\leq\frac{8(L+\varepsilon)}{J \varepsilon^2}\left(f(x_0)-\varepsilon\mathsf H(x_0)-\inf_{\mathcal K}\{f-\varepsilon\mathsf H\} +\omega_\varepsilon(e^{\delta}-1)\right) +\frac{8}{\varepsilon} \omega_\varepsilon(e^{\delta}-1).
\]
\qed

\subsection{Proof of \Cref{thm:dual-exact}}
\label{proof:thm:dual-exact}
From the proof of Lemma 2 in \cite{rioux2026discrete} 
    \[
        \ell_{\varepsilon}(u) = \frac 12 \|u\|^2+\inf_{x\in\mathcal K}\left\{\frac 12 x^{\intercal}\rB_1^{\intercal}\rB_1x -u^{\intercal}\rB_0 x -\varepsilon \mathsf H(x)\right\}
    \]
    and so, noting that $u_{j+1}=u_j - \nabla \ell_{\varepsilon}(u_j) = \rB_0 x_{u_j}$, 
    \[
    \begin{aligned}
        \ell_{\varepsilon}(u_j)- \ell_{\varepsilon}(u_{j+1}) &\geq \left(\frac 12\|u_j\|^2 +\frac{1}{2}\|\rB_1 x_{u_j}\|^2 - u_j^{\intercal}\rB_0 x_{u_j} -\varepsilon \mathsf H(x_{u_j})\right)
        \\
        &- \left(\frac 12\|u_{j+1}\|^2 +\frac{1}{2}\|\rB_1 x_{u_j}\|^2 - u_{j+1}^{\intercal}\rB_0 x_{u_j} -\varepsilon \mathsf H(x_{u_j})\right)
        \\
        & = \frac{1}{2}\|u_j\|^2 -u_j^{\intercal}\rB_0 x_{u_j} -\frac{1}{2}\|\rB_0 x_{u_j}\|^2 + \|\rB_0 x_{u_j}\|^2.
    \end{aligned}
    \]
    Conclude that $\ell_{\varepsilon}(u_j)- \ell_{\varepsilon}(u_{j+1})\geq \frac{1}{2}\|u_j-\rB_0x_{u_j}\|^2= \frac{1}{2}\|u_j-u_{j+1}\|^2$. 
    Summing this expression over the iterates gives 
    \[
        \frac 12 \sum_{j=0}^{J-1}\|u_j-u_{j+1}\|^2 \leq \ell_{\varepsilon}(u_0)- \ell_{\varepsilon}(u_J)\leq \ell_{\varepsilon}(u_0)-\inf_{\mathbb R^{r_0}} \ell_{\varepsilon}  
    \]

    which yields that 
    
    \[
    \min_{j=0}^{J-1}\|\nabla \ell_{\varepsilon}(u_j)\|^2= \min_{j=0}^{J-1}\|u_j-u_{j+1}\|^2 \leq \frac 2J \left(\ell_{\varepsilon}(u_0)-\inf_{\mathbb R^{r_0}} \ell_{\varepsilon}\right).
    \]
\qed

\subsection{Proof of \Cref{thm:dual-inexact}}
\label{proof:thm:dual-inexact}
Following the proof of \Cref{thm:dual-exact}, 
\[
\begin{aligned}
    \ell_{\varepsilon}(u_j)&=\frac 12 \|u_j\|^2 + \frac 12\|\rB_1x^{\star}_j\|^2 -u_j^{\intercal}\rB_0x^{\star}_j - \varepsilon\mathsf H(x_j^{\star})
    \\
    \ell_{\varepsilon}(u_{j+1})&\leq \frac 12 \|u_{j+1}\|^2 + \frac 12\|\rB_1x^{\star}_j\|^2 -u_{j+1}^{\intercal}\rB_0x^{\star}_j - \varepsilon\mathsf H(x_j^{\star}).
\end{aligned}
\]
Consequently, 
\[
\begin{aligned}
\ell_{\varepsilon}(u_j)-\ell_{\varepsilon}(u_{j+1})&\geq \frac 12 \|u_j\|^2- \frac 12 \|u_{j+1}\|^2 -(u_j-u_{j+1})^{\intercal}\rB_0x^{\star}_j
\\
&= \frac 12 \|u_j\|^2- \frac 12 \|u_{j+1}\|^2 -(u_j-u_{j+1})^{\intercal}\rB_0x^{\star}_j - \frac{1}{2}\|\rB_0x^{\star}_j\|^2 + \frac{1}{2}\|\rB_0x^{\star}_j\|^2
\\
&=\frac 12 \|u_j-\rB_0 x^{\star}_j\|^2- \frac 12 \|u_{j+1}-\rB_0 x^{\star}_j\|^2
\\
&= \frac 12 \|\nabla \ell_{\varepsilon}(u_j)\|^2- \frac 12 \|\rB_0 x_{u_j}-\rB_0 x^{\star}_j\|^2,
\end{aligned}
\]
where we recall that $u_j-\rB_0 x^{\star}_j=\nabla \ell_{\varepsilon}(u_j)$ is the true gradient of $\ell_{\varepsilon}$. Now, 
\[
    \|\rB_0 x_{u_j}-\rB_0 x^{\star}_j\|\leq \|\rB_0\|_{1,2}\|x_{u_j}-x^{\star}_j\|_1\leq \|\rB_0\|_{1,2}\tau,
\]
whereby
$
    \sum_{j=0}^{J-1} \|\nabla \ell_{\varepsilon}(u_j)\|^2 \leq 2\left(\ell_{\varepsilon}(u_0)-\ell_{\varepsilon}(u_J)\right) +J \|\rB_0\|^2_{1,2}\tau^2, 
$
and, since $\ell_{\varepsilon}(u_J)\geq \inf_{\mathbb R^{r_0}}\ell_{\varepsilon}$,
\[
\min_{j=0}^{J-1}  \|\nabla \ell_{\varepsilon}(u_j)\|^2 \leq \frac 2J\left(\ell_{\varepsilon}(u_0)-\inf_{\mathbb R^{r_0}}\ell_{\varepsilon}\right) + \|\rB_0\|^2_{1,2}\tau^2,
\]
proving the claim.
\qed

\subsection{Proof of \cref{thm:primalvsdual}}
\label{proof:thm:primalvsdual}
Suppose that $\bar u$ is an $\eta$-stationary point for $\ell_{\varepsilon}$ and some $\eta\geq 0$, that is, $\|\bar u-\rB_0 x_{\bar u}\|\leq \eta$ where $x_{\bar u}$ solves 
\[
    \inf_{x\in\mathcal K}\left\{\frac 12 x^{\intercal} \rB_1^{\intercal}\rB_1x-u^{\intercal}\rB_0x-\varepsilon \mathsf{H}(x)\right\}.
\]
Following the same arguments as in \Cref{sec:Clarke}, we necessarily have that 
\[
    \langle  \rB_1^{\intercal}\rB_1x_{\bar u}-\rB_0^{\intercal}\bar u + \nabla h(\bar x_{u}),x-\bar x_{u}\rangle \geq 0\text{ for each }x\in\mathcal K.
\] 
Applying the Cauchy-Schwarz inequality and the fact that  $\|\bar u-\rB_0 x_{\bar u}\|\leq \eta$,
\[
\langle -\rB_0^{\intercal}\bar u,x- x_{\bar u}\rangle  = \langle -\rB_0^{\intercal}(\bar u-\rB_0 x_{\bar u}+\rB_0 x_{\bar u}),x-x_{\bar u}\rangle \leq \langle -\rB_0^{\intercal}\rB_0 x_{\bar u},x-x_{\bar u}\rangle+  \|x-x_{\bar u}\|_1 \|\rB_0\|_{1,2}\eta. 
\] 
Since $\|x-x_{\bar u}\|_1 \leq 2$ as $x,x_{\bar u}$ are probability vectors, we obtain that, for each $x\in\mathcal K$, 
\[
     0\leq \langle  \rB_1^{\intercal}\rB_1x_{\bar u}-\rB_0^{\intercal}\bar u + \nabla h(x_{\bar u}),x-x_{\bar u}\rangle \leq  \langle  \rB_1^{\intercal}\rB_1x_{\bar u}-\rB_0^{\intercal}\rB_0x_{\bar u}+ \nabla h(x_{\bar u}),x-x_{\bar u}\rangle +2\|\rB_0\|_{1,2}\eta .
\]
Conclude that $\langle  -(\nabla h(x_{\bar u})+\nabla f(x_{\bar u})),x-x_{\bar u}\rangle \leq 2\|\rB_0\|_{1,2}\eta$ so that $x_{\bar u}$ is $2\|\rB_0\|_{1,2}\eta$-stationary for the primal problem according to the discussion in \Cref{rem:convergenceCriterion}. 
\qed

\section{Conclusion}
This work studied two provably convergent algorithms for solving Entropic Gromov--Wasserstein problems and established their connection to the mirror descent scheme  of \cite{peyre2016gromov} which is most commonly used by practitioners. First, we introduced Averaged Mirror Descent, a generalization of the mirror descent scheme for solving the \gls*{egw} problem which incorporates a step size parameter. This modification places \gls*{amd} within the conditional gradient framework of \cite{ghadimi2019conditional} which enabled us to obtain non-asymptotic convergence guarantees when exact iterates are assumed and to account for inexactness stemming from approximate \gls*{eot} solutions obtained via Sinkhorn's algorithm. Secondly, we showed that the dual gradient method from \cite{rioux2026discrete} is provably convergent with  a fixed regularization independent stepsize.
We have also discussed the connection between our dual method and the results obtained by \cite{houry2026} and showed that, for non-CNT costs, the MD scheme does not take true gradient steps explaining why it can fail to converge in practice on some examples.

In addition, we have empirically validated the convergence rates for both methods
across a variety of settings, demonstrating that \gls*{amd} attains performance
comparable to classical \gls*{md}. Moreover, the dual gradient method requires a similar number of steps as \gls*{amd}, but each iteration requires solving a regularized convex quadratic program in place of an \gls*{eot} problem so that the dual method is often slower in practice.

An important research direction stemming from this line of work is to speed up the gradient computation in the dual gradient method. While different gradient methods were proposed in \cite{rioux2026discrete}, their theoretically justified stepsizes become small as $\varepsilon$ decreases so that the convergence of these methods may be too slow for practical applications.  
\medskip

\textbf{AI use statement.} ChatGPT Astra and Claude Opus 5 were used for copyediting, to assist with figure creation and synthetic dataset generation, and to write auxiliary code for the simulations (plotting, multi-seed runs). They were not used to implement the main algorithms nor to  develop the theoretical results. 

\newpage

\bibliographystyle{plainnat}
\bibliography{bibliography}

\appendix

\section{Sinkhorn's algorithm}
\label{sec:Sinkhorn}
Sinkhorn's algorithm, \Cref{algo:sinkhorn}, solves an \gls*{eot} problem $\mathsf{EOT}_c^{\varepsilon}(\mu_0,\mu_1)$ between marginals supported on finite sets $\{x_0^{(i)}\}_{i=1}^{N_0}$ and $\{x_1^{(j)}\}_{j=1}^{N_1}$ respectively by using the fact that its solutions, $\pi^{\star}$, satisfy 
\begin{equation}
\label{eq:eotCoupling}
    \frac{\pi^{\star}(\{x^{(i)}_0,x^{(j)}_1\})}{\mu_0(\{x_0^{(i)}\})\mu_1(\{x_1^{(j)}\})} =  e^{\frac{u_i^{\star}+v_j^{\star}-c(x_0^{(i)},x_1^{(j)})}{\varepsilon}}\text{ for each }i=1,\dots, N_0,j=1,\dots,N_1,
\end{equation}
for some vectors $u^{\star}=(u_1^{\star},\dots,u_{N_0}^{\star})$ and  $v^{\star}=(v_1^{\star},\dots,v_{N_1}^{\star})$ which are unique up to adding a constant vector to $u^{\star}$ and subtracting the same constant vector from $v^{\star}$. With this representation, Sinkhorn's method fixes the cost matrix 
\[
   (\bm K)_{ij} = e^{\frac{-c(x_0^{(i)},x_1^{(j)})}{\varepsilon}}\text{ for each }i=1,\dots, N_0,j=1,\dots,N_1, 
\]
and the marginals $r_0=\left( \mu_0(\{x_0^{(1)}\}),\dots, \mu_0(\{x_0^{(N_0)}\})\right)$, $r_1=\left( \mu_1(\{x_1^{(1)}\}),\dots, \mu_1(\{x_1^{(N_1)}\})\right)$ and performs iterations  on a matrix with positive entries summing to $1$ which consist of iteratively fixing its marginals (the vectors obtained by left or right multiplication with the vector of ones) as follows.

\begin{algorithm}[H]
\caption{Sinkhorn Algorithm}
\label{algo:sinkhorn}
\SetAlgoLined
\SetAlgoNoEnd
\KwIn{stopping threshold $\gamma$, max iteration count $K$, cost matrix $\bm K$, marginal vectors $r_0,r_1$}
$u_0\gets \mathbbm{1}_{N_0}/N_0$\;
$k\gets 1$\;
\For{$k = 0, 1, \ldots, K-1$}{
 $v_k\gets r_1/(\bm K^{\intercal}u_{k-1})$\;
$u_k\gets r_0/(\bm K v_k)$\;
$\bm \Pi^k\gets \mathrm{diag}(u_k)\bm K\mathrm{diag}(v_k)$\;
\If{ $\|(\bm \Pi^k)^{\intercal}\mathbbm{1}_{N_0} -r_1\|_1<\gamma$ }{\text{break}\;}
$k\gets k+1$\;
}
\end{algorithm} 
The convergence properties of \Cref{algo:sinkhorn} are well-established as noted in Lemma \ref{lem:convSinkhorn}. Although the output, $\bm \Pi$, of \Cref{algo:sinkhorn} does not necessarily satisfy both marginal constraints simultaneously, $\bm \Pi$ is evidently positive by construction and satisfies the marginal constraint of $\bm \Pi \mathbbm 1_{N_1}=r_0$ by construction. It follows that $\bm \Pi$ can be thought of as a probability matrix and hence can be identified with a probability distribution on $\mathcal X_0\times \mathcal X_1$. 

We conclude this section with the proof of Lemma \ref{lem:convSinkhorn} now that the setting is clear. 

\subsection{Proof of Lemma \ref{lem:convSinkhorn}}
\label{sec:proofSinkhornConvergence}
The proof of Proposition 8 in \cite{rioux2024entropic} establishes that, 
for each $\delta>0$, there exists a number of steps $n$ depending on $\delta,\varepsilon, \|c\|_{\infty},\mu_0,$ and $\mu_1$ after which Sinkhorn's algorithm, as written in \Cref{algo:sinkhorn}, outputs a vector,  $\tilde x$, satisfying $\mathsf d(\tilde x,x^{\star})\leq \delta$ where $\mathsf d$ is a certain pseudometric. The proof of Lemma 26 in the same reference shows that if $\mathsf d(\tilde x,x^{\star})\leq \delta$, 
\[
   |\tilde x_i-x^{\star}_i|\leq \tilde x_i(e^{\delta}-1)\text{ for each }i\in\{1,\dots,k\}.
\]
Conclude that $\|\tilde x-x^{\star}\|_1\leq e^{\delta}-1$ as claimed.
\qed

\section{Additional details on the convergence metrics}
This section provides additional context for the convergence metrics used in \Cref{prop:md-average-exact,prop:md-average-inexact}. We begin with the case of exact iterates and later comment on the case where iterates are inexact. 

\subsection{Exact iterates}
\label{sec:Clarke}

We first establish that every local or global minimizer of $f+h$ over $\mathcal K$ has strictly positive entries. We denote this property by $x\in\mathbb R^k_{>0}$.

\begin{lemma}
\label{lem:relativeInteriorSolutions}
    Every local or global minimizer of $f+h$ over $\mathcal K$ lies in $\mathbb R^k_{>0}$.
\end{lemma}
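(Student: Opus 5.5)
We argue by contradiction with a first-order perturbation. The key fact is that the entropy has infinite slope at the boundary of the orthant, while the quadratic part $f(x)=\frac12 x^{\intercal}\rM x$ has bounded directional derivatives on $\cK$. So moving mass from a minimizer with a zero entry toward a strictly positive coupling strictly decreases the objective for small steps.

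Fix a local (or global) minimizer $\bar x\in\cK$ of $f+h$, and suppose the index set $Z=\{i:\bar x_i=0\}$ is nonempty. As a comparison point we take the product coupling $x^{\circ}=\vec(r_1 r_0^{\intercal})$, i.e.\ the vectorization of $\mu_0\otimes\mu_1$. It lies in $\cK$, and every entry is positive because $\mu_0$ and $\mu_1$ charge every point. For $t\in(0,1]$ set $x_t=(1-t)\bar x+t x^{\circ}$. This point lies in $\cK$ by convexity, and $\|x_t-\bar x\|_1\le 2t$, so $x_t$ lies in any prescribed neighbourhood of $\bar x$ for $t$ small. We then show that $g(t)=(f+h)(x_t)-(f+h)(\bar x)$ is negative for all sufficiently small $t>0$, which contradicts local minimality.

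To estimate $g$, split it into the quadratic part and the entropy part.

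\textbf{Quadratic part.} We have $f(x_t)-f(\bar x)=t\,\bar x^{\intercal}\rM(x^{\circ}-\bar x)+\frac{t^2}{2}(x^{\circ}-\bar x)^{\intercal}\rM(x^{\circ}-\bar x)$. Both coefficients are bounded by multiples of $\maxnorm{\rM}$, so this term is at most $Ct$ with $C=O(\maxnorm{\rM})$.

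\textbf{Entropy part.} Write $h(x)=\varepsilon\sum_i x_i\log x_i$ and split the sum over $i\in Z$ and $i\notin Z$.
For $i\notin Z$, the map $t\mapsto x_{t,i}\log x_{t,i}$ is differentiable at $t=0$ with finite derivative. Hence these terms contribute at most $C't$ for small $t$.
For $i\in Z$, we have $x_{t,i}=t x^{\circ}_i$, so the contribution is $\varepsilon\, t x^{\circ}_i\log(t x^{\circ}_i)=\varepsilon t x^{\circ}_i\log t+O(t)$.
Since $Z\neq\emptyset$ and $x^{\circ}_i>0$, this gives $g(t)\le t\big(C''+\varepsilon\,(\textstyle\sum_{i\in Z}x^{\circ}_i)\log t\big)$. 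The bracket tends to $-\infty$ as $t\downarrow0$, so $g(t)<0$ for small $t$, which is the desired contradiction.

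The only delicate step is making the $O(t)$ bounds for the indices $i\notin Z$ uniform, so that they really are dominated by $t\log t$. We handle this with the mean value theorem: for small $t$, $x_{t,i}\ge \bar x_i/2>0$, so $|\log x_{t,i}|+1$ is bounded there. Alternatively, convexity of $s\mapsto s\log s$ gives the bound $x_{t,i}\log x_{t,i}\le(1-t)\bar x_i\log\bar x_i+t x^{\circ}_i\log x^{\circ}_i$ directly.

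Global minimizers, which exist by compactness of $\cK$ and continuity of $f+h$, are in particular local minimizers, so the same argument covers them.
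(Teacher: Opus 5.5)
Your proof is correct and follows essentially the same route as the paper: perturb the putative minimizer toward a strictly positive point of $\mathcal K$, and use that on the zero coordinates the entropy term behaves like $t\,x_i\log(t x_i)$, whose slope tends to $-\infty$, while everything else stays $O(t)$. The only differences are cosmetic: you take the product coupling $\mu_0\otimes\mu_1$ as the positive point (the paper appeals to the EOT solution form), and you bound the finite difference directly, which makes explicit the $O(t)$ control on the positive coordinates that the paper leaves implicit.
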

\begin{proof}
 By definition $\bar x$ is a local minimizer of $f+h$ over $\mathcal K$ if and only if there exists a neighborhood, $N$, of $\bar x$ for which $(f+h)(x)\geq (f+h)(\bar x)$ for every $x\in N\cap \mathcal K$. It follows that, 
\begin{equation}
\label{eq:directionalDerivative}
    \liminf_{t\downarrow 0}\frac{(f+h)(\bar x+t(x-\bar x))-(f+h)(\bar x)}{t}= \liminf_{t\downarrow 0}\frac{(f+h)(x_t)-(f+h)(\bar x)}{t}\geq 0
\end{equation}
for any choice of $x\in N\cap \mathcal K$ noting that $x_t= (1-t)\bar x + t x\in \mathcal K$ for each $t\in[0,1]$ by convexity of $\mathcal K$ so that limit is well-defined and $x_t\in N$ once $t$ is sufficiently small. Since $f$ is smooth,
\[
    \lim_{t\downarrow 0}\frac{f(x_t)-f(\bar x)}{t}=\bar x^{\intercal}\rM(x-\bar x).
\]
As for $h$, observe that 
\[
\begin{aligned}
 {-\mathsf H(x_t)+\mathsf H(\bar x)} &= \sum_{i=1}^{k} \left( \bar x_i+t(x_i-\bar x_i)\right)\log \left( \bar x_i+t(x_i-\bar x_i)\right) - \bar x_i\log \left( \bar x_i\right)
 \\
 &=\sum_{i=1}^{k} \left( \bar x_i\left(\log \left( \bar x_i+t(x_i-\bar x_i)\right)-\log \left( \bar x_i\right)\right)+t(x_i-\bar x_i)\log \left( \bar x_i+t(x_i-\bar x_i)\right) \right).
 \end{aligned}
\]
Now, if $\bar x_I=0$ for some $I\in\{1,\dots,k\}$ we obtain that
\[
    \bar x_I\left(\log \left( \bar x_I+t(x_I-\bar x_I)\right)-\log \left( \bar x_I\right)\right)+t(x_I-\bar x_I)\log \left( \bar x_I+t(x_I-\bar x_I)\right) = tx_I\log \left( t x_I\right). 
\]
and
$\lim_{t\downarrow 0} x_I\log(t x_I)=-\infty$ if $x_I>0$. As noted in \Eqref{eq:eotCoupling}, there always exists some $x'\in\mathcal K\cap \mathbb R^k_{>0}$ and, by convexity of $\mathcal K$, $x=(1-\alpha) \bar x+\alpha x'\in (\mathcal K\cap N)\cap \mathbb R^k_{>0}$ for some $\alpha \in (0,1)$ whereby 
\[
 \lim_{t\downarrow 0}\frac{(f+h)(\bar x+t(x-\bar x))-(f+h)(\bar x)}{t}=-\infty
\]
for this choice of $x$, contradicting optimality of $\bar x$.

Conclude that a local minimizer, $\bar x$, must also have strictly positive entries and hence this proviso must also be met for a global minimizer.    
\end{proof}

With this, we obtain a necessary condition for optimality for the problem $\inf_{x\in\mathcal K}\{f+h\}$.
\begin{proposition}
    \label{prop:optimalityCondition}
    Suppose that $\bar x$ is a global minimizer of $f+h$ over $\mathcal K$. Then, $h$ is differentiable at $\bar x$  with derivative $\nabla h(\bar x)$ defined componentwise as
\[
    (\nabla h(\bar x))_i = \varepsilon(1+\log(\bar x_i)) \text{ for }i\in\{1,\dots,k\}   
\] 
and $\bar x$ must satisfy the property that 
    \begin{equation}
    \label{eq:necessaryCondition}
        \langle \nabla f(\bar x)+\nabla h(\bar x),x-\bar x \rangle \geq 0\text{ for each } x\in \mathcal K.
    \end{equation}
\end{proposition}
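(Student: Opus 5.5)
The plan is to build on Lemma~\ref{lem:relativeInteriorSolutions}. It gives $\bar x\in\mathbb R^k_{>0}$, so $\bar x$ lies in the open set $\mathbb R^k_{>0}$, on which $h(x)=\varepsilon\sum_i x_i\log x_i$ is smooth. Hence $h$ is classically differentiable at $\bar x$, and direct computation of $\partial_{x_i}(\varepsilon x_i\log x_i)$ yields $(\nabla h(\bar x))_i=\varepsilon(1+\log\bar x_i)$. The same holds for $f$, which is smooth everywhere with $\nabla f(\bar x)=\rM\bar x$.

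For the variational inequality \eqref{eq:necessaryCondition}, fix an arbitrary $x\in\mathcal K$ and set $x_t=(1-t)\bar x+tx\in\mathcal K$ for $t\in[0,1]$; these points lie in $\mathcal K$ by convexity. A global minimizer is in particular a local one, so, exactly as in \eqref{eq:directionalDerivative}, the one-sided difference quotients satisfy $t^{-1}\big((f+h)(x_t)-(f+h)(\bar x)\big)\geq 0$ for every $t\in(0,1]$. Since $\bar x$ has strictly positive entries, $x_t\in\mathbb R^k_{>0}$ for all sufficiently small $t$. Because $f+h$ is differentiable at $\bar x$, the chain rule then says this quotient converges to $\langle\nabla f(\bar x)+\nabla h(\bar x),x-\bar x\rangle$ as $t\downarrow 0$. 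Passing to the limit in the nonnegative quotients gives \eqref{eq:necessaryCondition}.

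The only delicate point is ensuring differentiability is used in the right sense, namely at an interior point of the domain of $h$ rather than merely a one-sided derivative on the boundary. This is exactly what Lemma~\ref{lem:relativeInteriorSolutions} supplies. With that in hand the argument is routine: it is the standard first-order condition for minimizing a function over a convex set at a point of differentiability. Note that the argument works for local minimizers as well.
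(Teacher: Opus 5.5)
Your proof is correct and follows essentially the same route as the paper: Lemma~\ref{lem:relativeInteriorSolutions} places $\bar x$ in the open set $\mathbb R^k_{>0}$ where $h$ is smooth, which gives the componentwise formula for $\nabla h(\bar x)$, and the inequality follows by letting $t\downarrow 0$ in the nonnegative difference quotients of \eqref{eq:directionalDerivative} along $x_t=(1-t)\bar x+tx$. The only difference is that you write out the limit step, which the paper states in a single line.
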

\begin{proof}
    By Lemma \ref{lem:relativeInteriorSolutions} $\bar x$ is an element of $\mathcal K\cap \mathbb R_{>0}^{k}$ so that the formula for the gradient of $h$ at $\bar x$ follows by a computation. Since $\bar x$ is a global minimizer, \Eqref{eq:directionalDerivative}  must hold at each $x\in\mathcal K$, that is, 
    $
        \langle \nabla f(\bar x)+\nabla h(\bar x),x-\bar x\rangle \geq 0
    $ for each $x\in\mathcal K$
    as claimed. 
\end{proof} 

With this result, it is natural to define the notion of a near stationary point in terms of how much \Eqref{eq:necessaryCondition} is violated. As noted in the main text, we can thus call $\tilde x\in\mathcal K\cap \mathbb R_{>0}^k$ $\gamma$-approximately stationary for this problem if  
$
        \langle \nabla f(\tilde x)+\nabla h(\tilde x),x-\tilde x\rangle \geq -\gamma
    $
    for some $\gamma>0$ and for each $x\in\mathcal K$. 

    We conclude this section by recalling that the solution of each \gls*{eot} subproblem is an element of $\mathbb R_{>0}^k$ (see \Eqref{eq:eotCoupling}) so that the approximate stationarity condition above makes sense at each $\tilde x_j$ as the derivatives are well-defined.  

    \subsection{Inexact iterates}
    \label{app:inexactIterates}
    As noted in Remark~\ref{rmk:inexactMetric}, the convergence metric $\min_{j=0}^{J-1}\|x_j-\tilde x_j\|_1^2$ is more opaque in the inexact setting since $\tilde x_j$ is not an exact solution of the \gls*{eot} subproblem and is not an element of $\mathcal K$.  

    \begin{proposition} Fix $\delta>0$ and suppose that every EOT subproblem is solved to within an accuracy of $e^{\delta}-1$ in the $1$-norm. Furthermore,
        let $\tilde x_j^{\star}$ be the unique solution of $\min_{x\in\mathcal K}\{x_j^{\intercal}\rM x-\varepsilon\mathsf H(x)\}$ for each $j\in\{0,\dots,J-1\}$ and hence satisfies $\|\tilde x_j^{\star}-\tilde x_j\|_1\leq e^{\delta}-1$. Then, if $\|x_j-\tilde x_j\|_1\leq \tau$ it holds that 
        \[
            \langle - \nabla f(\tilde x_j^{\star})-\nabla h(\tilde x_j^{\star}),x-\tilde x_j^{\star}\rangle \leq \maxnorm{\rM}\sup_{x,y\in\mathcal K}\|x-y\|_1\left(\tau+e^{\delta}-1\right),
        \]
        that is, $\tilde x_j^{\star}$ is an $\maxnorm{\rM}\sup_{x,y\in\mathcal K}\|x-y\|_1\left(\tau+e^{\delta}-1\right)$-approximate stationary point which is a distance at most $e^{\delta}-1$ from $\tilde x_j$.
    \end{proposition}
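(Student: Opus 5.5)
The plan is to compare the exact first-order optimality condition satisfied by $\tilde x_j^{\star}$ for the linearized (EOT) subproblem with the stationarity condition for the full problem. The only discrepancy between the two is the gradient of $f$, evaluated at $x_j$ in one and at $\tilde x_j^{\star}$ in the other. That discrepancy is controlled by the Lipschitz bound $\|\rM z\|_\infty\le\maxnorm{\rM}\|z\|_1$ together with the triangle inequality $\|x_j-\tilde x_j^{\star}\|_1\le\|x_j-\tilde x_j\|_1+\|\tilde x_j-\tilde x_j^{\star}\|_1\le\tau+e^{\delta}-1$.

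\emph{Step 1: optimality condition for the subproblem.} The subproblem minimizes $\phi_{x_j}(x)=\nabla f(x_j)^{\intercal}x+h(x)$ over $\mathcal K$, since $\nabla f(x_j)=\rM x_j$. Its minimizer $\tilde x_j^{\star}$ has strictly positive entries by \Eqref{eq:eotCoupling}, so $\nabla h(\tilde x_j^{\star})$ is well defined. The proofs of Lemma~\ref{lem:relativeInteriorSolutions} and Proposition~\ref{prop:optimalityCondition} only used smoothness of the quadratic part, so they apply verbatim with $f$ replaced by the linear map $x\mapsto\nabla f(x_j)^{\intercal}x$. This yields
\[
\langle \nabla f(x_j)+\nabla h(\tilde x_j^{\star}),x-\tilde x_j^{\star}\rangle\ge 0\quad\text{for each }x\in\mathcal K .
\]
One point needs care: $x_j$ itself need not lie in $\mathcal K$ in the inexact setting. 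This does not matter, because $x_j$ enters only as a fixed cost vector.

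\emph{Step 2: decomposition.} For $x\in\mathcal K$, write
\[
\langle -\nabla f(\tilde x_j^{\star})-\nabla h(\tilde x_j^{\star}),x-\tilde x_j^{\star}\rangle
=\langle \rM(x_j-\tilde x_j^{\star}),x-\tilde x_j^{\star}\rangle-\langle \nabla f(x_j)+\nabla h(\tilde x_j^{\star}),x-\tilde x_j^{\star}\rangle .
\]
By Step 1 the second term is nonpositive. The first term is bounded by Hölder's inequality:
\[
\|\rM(x_j-\tilde x_j^{\star})\|_\infty\,\|x-\tilde x_j^{\star}\|_1\le\maxnorm{\rM}\,\|x_j-\tilde x_j^{\star}\|_1\sup_{x,y\in\mathcal K}\|x-y\|_1 .
\]
Here we use $\tilde x_j^{\star}\in\mathcal K$.

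\emph{Step 3: conclusion.} Insert the triangle-inequality bound $\|x_j-\tilde x_j^{\star}\|_1\le\tau+e^{\delta}-1$, using the hypothesis $\|x_j-\tilde x_j\|_1\le\tau$ and Lemma~\ref{lem:convSinkhorn}. This gives the claimed inequality. The closeness statement $\|\tilde x_j^{\star}-\tilde x_j\|_1\le e^{\delta}-1$ is part of the hypothesis.

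The only step that is not purely mechanical is Step 1. The obstacle there is justifying that the variational inequality holds at $\tilde x_j^{\star}$ with a classical gradient of $h$. I would handle it by reusing the boundary argument of Lemma~\ref{lem:relativeInteriorSolutions} for the linear objective, or more directly by invoking the positive Gibbs form \Eqref{eq:eotCoupling} of the EOT solution.
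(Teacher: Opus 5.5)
Your proposal is correct and follows essentially the same route as the paper. Both arguments use the first-order optimality condition for $\tilde x_j^{\star}$ in the linearized EOT subproblem, then isolate the term $\langle \rM(x_j-\tilde x_j^{\star}),x-\tilde x_j^{\star}\rangle$ and bound it via H\"older with $\|\rM z\|_\infty\le\maxnorm{\rM}\|z\|_1$ and $\|x_j-\tilde x_j^{\star}\|_1\le\tau+e^{\delta}-1$; you spell out this last triangle-inequality step, which the paper leaves implicit.
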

    \begin{proof}
        Since $\tilde x^{\star}_j$ solves $\min_{x\in\mathcal K}\{x_j^{\intercal}\rM x-\varepsilon\mathsf H(x)\}$ we have from the same logic as the proof of Lemma \ref{lem:relativeInteriorSolutions} and Proposition \ref{prop:optimalityCondition} that $\tilde x^{\star}_j\in\mathbb R^k_{>0}$ and  
        \[
           \langle -(\nabla f(x_j)+\nabla h(\tilde x^{\star}_j)),x-\tilde x^{\star}_j\rangle \leq 0\text{ for each }x\in\mathcal K.
        \]
        Since $f$ has $\maxnorm{\rM}$-Lipschitz continuous gradient in the $\infty$-norm, 
        \[
        \begin{aligned}
            \langle\nabla f(x_j)-\nabla f(\tilde x_j^{\star}),x-\tilde x_j^{\star}\rangle&\leq \|\nabla f(x_j)-\nabla f(\tilde x_j^{\star})\|_{\infty}\|x-\tilde x_j^{\star}\|_1 
            \\
            &\leq \sup_{x,y\in\mathcal K}\|x-y\|_{1}\maxnorm{\rM}\|x_j-\tilde x_j^{\star}\|_1
            \\
            &\leq \sup_{x,y\in\mathcal K}\|x-y\|_{1}\maxnorm{\rM}(\tau +e^{\delta}-1).
        \end{aligned} 
        \]
        Combining the two displayed equations above we obtain that 
        \[
             \langle -(\nabla f(\tilde x_j^{\star})+\nabla h(\tilde x^{\star}_j)),x-\tilde x^{\star}_j\rangle\leq \sup_{x,y\in\mathcal K}\|x-y\|_{1}\maxnorm{\rM}(\tau +e^{\delta}-1) 
        \]
        which shows that $\tilde x_j^{\star}$ is an $\sup_{x,y\in\mathcal K}\|x-y\|_{1}\maxnorm{\rM}(\tau +e^{\delta}-1)$-approximate stationary point of the EGW problem.
    \end{proof}

\section{Experimental details}\label{app:experimental-details}
\subsection{Dataset generation}
\label{app:dataset-generation}

\paragraph{Gaussian mixture.}
We independently generate $1000$ points from two $10$-component Gaussian
mixtures, one in $\mathbb{R}^{10}$ and the other in $\mathbb{R}^{15}$.
Component weights are sampled from a symmetric
$\operatorname{Dirichlet}(2)$ distribution, component centres from
$\mathcal{N}(0,9I)$, and each component has standard deviation $0.45$.
For visualization, each point cloud is projected onto its first two principal components. The cost is set to be the squared Euclidean distance. Both cost matrices are rescaled by the common factor $3/\max\{\mathrm{median}(\mathrm{K}_0),\mathrm{median}(\mathrm{K}_1)\}$ so that
they are on a comparable scale; since the same factor is applied to both, this is equivalent to rescaling $\varepsilon$.

\paragraph{Spherical point clouds.}
We sample $100$ points on $\mathbb{S}^2$ from a von Mises--Fisher
distribution with mean direction $(0,0,1)^\top$ and concentration
$\kappa=40$. The second point cloud is obtained by applying a random rotation to the first. The cost is chosen to be the geodesic distance.

\paragraph{Gaussian--ring.}
We generate two point clouds of $100$ points in $\mathbb{R}^2$. The first is
sampled independently from $x_i\sim\mathcal{N}(0,0.1^2I_2)$.
For the second, we draw
$\theta_i\sim\operatorname{Unif}[0,2\pi)$ independently and set
$y_i=5(\cos\theta_i,\sin\theta_i),$ so that the points are uniformly distributed on a circle of radius $5$ centered at the origin. The cost is set to be the Euclidean distance raised to the sixth power.

\paragraph{Example from \cite{houry2026}}

We reconstruct the example from \cite{houry2026} using
two fixed clouds of $15$ points in $\mathbb R^2$.
The pixel coordinates were  approximated by an LLM from the
published visualization, as the exact figure reproducing code was not publicly available. The sampled points are defined as
\[
x_i=(q_i^X,-r_i^X)/150,\qquad
y_i=(q_i^Y,-r_i^Y)/150,
\]
using the entries in the following coordinate table and the weights are taken to be uniform.

\[
\begin{array}{c|rr|rr}
i&q_i^X&r_i^X&q_i^Y&r_i^Y\\ \hline
1 &105&131&244&48\\
2 &120&138&260&43\\
3 &98 &197&279&50\\
4 &144&202&282&60\\
5 &126&225&267&74\\
6 &114&238&230&88\\
7 &133&242&290&87\\
8 &145&230&284&103\\
9 &187&219&304&109\\
10&180&250&291&119\\
11&203&234&255&119\\
12&233&266&245&140\\
13&211&150&344&115\\
14&214&139&344&130\\
15&122&238&313&25
\end{array}
\]

To verify that the approximation was accurate we run the \texttt{EntropicGW} function for the squared Euclidean cost and the Euclidean cost raised to the sixth power from the repository supplied by \cite{houry2026} and obtained the same oscillatory effect for the sixth power cost.

\subsection{Implementation details and computational complexity}\label{app:implementation}
\subsubsection{Implementation of \gls*{amd}}
\label{app:amd}
Our implementation of \gls*{amd} adapts the \gls*{md} method used in \citet{scetbon2022linear} which leverages low-rank factorizations of the cost matrices to speed up computations, see Algorithm~2 in \citet{scetbon2022linear}). Once the low-rank factorizations of the costs are given (we discuss similar decompositions in the following section), the complexity of this algorithm is O($N_0N_1 SJ$) up to log factors where $J$ denotes the number of iterations needed to reach approximate stationarity and $S$ denotes the maximum number of Sinkhorn's iterations. We note that the averaging step can impact $J$ which can be both larger and smaller than for the classical MD depending on the exact experimental setting as we have shown in \Cref{sec:experiments}.

\subsubsection{Implementation of the variational methods}
\label{app:dual-implementation}

\paragraph{Matrix decomposition.}
In order to implement the variational methods we first need to find the decomposition $\rM = \rB_1^\intercal \rB_1 -\rB_0^\intercal \rB_0$.
 A generic approach to construct $\rB_0$ and $\rB_1$ described in \cite{rioux2026discrete}, is to compute the eigendecomposition of $\rM$ and let $\rB_1=\mathrm{diag}(\sqrt{\lambda_1},\dots,\sqrt{\lambda_{r_1}}) \mathbf V_1$, where $(\lambda_1,\dots,\lambda_{r_1})$ is the collection of all positive eigenvalues of $\rM$ and $\mathbf V_1\in\mathbb R^{r_1\times k}$ has $i$-th row given by the (transpose of the) $i$-th eigenvector $v_i$. $\rB_0$ is constructed similarly using the absolute value of the negative eigenvalues. Eigenvalues with
absolute value at most $10^{-10}$ are discarded. As noted by \cite{rioux2026discrete} for the case $p=2$, 
 it suffices to solve the regularized quadratic program,  \eqref{eq:EGWQP}, with $\rM=-4\mathrm K_0\otimes \mathrm{K}_1$, where $\mathrm K_0\otimes \mathrm{K}_1$ is the Kronecker product of
 the pairwise cost matrices $\mathrm{K}_0\in\mathbb R^{N_0\times N_0},\mathrm{K}_1\in\mathbb R^{N_1\times N_1}$ whose $ij$-th entries are given by $\kappa_0\left(x_0^{(i)},x_0^{(j)}\right)$ and $\kappa_1\left(x_1^{(i)},x_1^{(j)}\right)$ respectively. They note that the complexity of this decomposition is $O(\max\{N_0^3,N_1^3\})$.  They further note that the memory efficiency of the implementation can be enhanced using some properties of Kronecker product and hence in all of our runs we use the memory-efficient implementation as described in \cite{rioux2026discrete} which applies $\rB_0$, $\rB_1$ and their transposes through matrix products using their derived memory efficient formulas without explicitly forming the full matrices  $\rB_0$, $\rB_1$.

\paragraph{Hyperparameters.} For all of the variational solvers, the inner maximization is set to a simple gradient method (Algorithm 6 in \cite{rioux2026discrete}) which requires solving  an EOT problem at each iteration. The number of gradient ascent steps and the step size varies by experiment and is described in the \Cref{app:experimental-settings}. The gradient ascent is terminated early if the squared norm of the gradient is below $10^{-6}$. For the Variational (1) method we set the outer minimization stepsize to $1$, which is theoretically justified by our convergence analysis. For the Variational ($L$) and Accelerated variational ($L$) we set the outer step sizes based on the value of $L_{\text{var}} =\frac{\lambda_{\max}(\rB_0^{\intercal}\rB_0)}{\lambda_{\min}(\rB_1^{\intercal}\rB_1)+\varepsilon} + 1$ for Variational ($L$) and the update sequences for Accelerated variational ($L$) are set using $L_{\text{acc}} = \max\left\{1,\frac{\lambda_{\max}(\rB_0^{\intercal}\rB_0)}{\lambda_{\min}(\rB_1^{\intercal}\rB_1)+\varepsilon}-1\right\}$ where $\lambda_{\min}$ and $\lambda_{\max}$ denote the smallest and largest eigenvalues respectively. These values come from Proposition 10 and Theorem 5 in \cite{rioux2026discrete}. We initialize all solvers from the product coupling $\mu_0 \otimes \mu_1$. %

\paragraph{Complexity.} Assuming that $\rB_0$ and $\rB_1$ are available, \cite{rioux2026discrete} report that the overall computational complexity of their simple gradient method with a first order oracle used for the inner solve scales as $O(J_{\text{outer}}J_{\text{inner}}(S+\max\{r_0,r_1\})N_0N_1)$, where $J_{\text{outer}}$ denotes the number of outer iterations, $J_{\text{inner}}$ denotes the number of inner oracle iterations and $S$ is the maximum number of Sinkhorn's iterations. The $r_0$ and $r_1$ denote the dimensions of $B_0$ and $B_1$ respectively.

\subsection{Experimental details}
\label{app:experimental-settings}

\paragraph{Sinkhorn implementation and parameters.}
All EOT problems are solved using POT \citep{flamary2021pot}. We either use the standard \texttt{sinkhorn\_knopp} or  \texttt{sinkhorn\_epsilon\_scaling} which is more stable, but slower.
For the three datasets, the termination threshold for these methods is set to
$10^{-9}$. For \texttt{sinkhorn\_knopp} we set the maximum number of iterations to $1000$ and, for \texttt{sinkhorn\_epsilon\_scaling},  a maximum of $20\,000$ continuation iterations
with POT's default of $100$ inner scaling iterations
per continuation step is used. 

The settings used for the methods are compiled in  \Cref{tab:experiment-settings}.

\begin{table}[htbp]
\centering
\small
\begin{tabular}{lccccc}
\toprule
Data & $\varepsilon$ & $J_{\text{outer}}$ & $J_{\text{inner}}$ & Inner stepsize & Sinkhorn method \\
\midrule
Gaussian mixture & $0.1$  & $10\,000$ & $100$ & $0.5$  & \texttt{sinkhorn\_knopp} \\
Sphere           & $0.01$ & $10\,000$ & $100$ & $0.5$  & \texttt{sinkhorn\_knopp} \\
Gaussian-ring   & $1$    & $300$     & $200$ & $0.05$ & \texttt{sinkhorn\_epsilon\_scaling} \\
\bottomrule
\end{tabular}
\caption{Settings for the experiments. $J_{\mathrm{outer}}$ is the total number of iterations used for \gls*{amd} (\Cref{algo:md-average}) and the dual gradient method (\Cref{algo:dual}). $J_{\mathrm{inner}}$ is the maximum  number of gradient ascent iterations used to solve each convex quadratic subproblem in \Cref{algo:dual} and inner stepsize refers to the stepsize used for this gradient ascent algorithm. Sinkhorn method refers to what method was used to solve the \gls*{eot} problems.}  %
\label{tab:experiment-settings}
\end{table}

The \texttt{sinkhorn-epsilon-scaling} was used for the Gaussian-ring example, as \texttt{sinkhorn-knopp} encounters division by zero errors.

\paragraph{Stopping criteria.}
 We terminate \gls*{amd} when $\| \tilde{x}_j - x_j \|^2_1 < 10^{-6}$ or when the maximum number of iterations is reached. For all experiments except for the Gaussian-ring example this tolerance is always met before reaching the maximum number of iterations. For the Gaussian-ring examples 8/10 runs for $\alpha = 0.1$ reach the stopping criterion within the prescribed number of iterations, 9/10 runs for $\alpha = 0.5$, 6/10 runs for $\alpha = 0.9$ and none of the classical mirror descent ($\alpha=1$) runs reach the prescribed tolerance (see \Cref{tab:runtime} for details).

All of the dual gradient methods terminate once the requisite metric is less than $10^{-6}$ or a maximum number of iterations is met. For the Variational (1) and Variational (L) methods, the metric of interest is $\|\tilde{\nabla} \ell(u_j)\|$ and, for the Accelerated variational (L) method, it is $\| u_j - z_j \|^2$, where $z_j$ is a second sequence used to add momentum (see Algorithm 2 in \cite{rioux2026discrete}). For the Variational (1) method the outer tolerance is always met first, but this is not always the case for Variational (L) and Accelerated variational (L) as the theoretical stepsizes are too conservative. %

For the Gaussian-to-ring example, the gradient ascent method does not meet the prescribed tolerance for solving the regularized convex subproblem for the first iterate, but does meet it first for the subsequent iterations. 
For all other examples this inner tolerance is always met before the maximum number of inner iterations is reached.

\paragraph{Aggregation and plotting.}
The stationarity curves show the geometric mean of the iterate differences plus one standard deviation in base-10 log space. Objective curves show the arithmetic mean of the \gls*{egw} values over runs and one
standard deviation. Running minima are computed within
each seed before aggregation. The aggregate curves are only plotted when at least half of the seeds converge to avoid misleading conclusions. %

 \Cref{tab:runtime} compiles the average runtime for each solver  excluding steps such as decomposing the matrix  for the variational methods. The large standard deviation for the Gaussian-ring example stems from the fact that  not all of the runs reach the prescribed tolerance in the chosen number of iterations.

\paragraph{Example from \cite{houry2026}}

For this example, we use the standard \texttt{sinkhorn-knopp} method with the stopping threshold set to $10^{-9}$ and at most
$10\,000$ iterations. All methods use a maximum of $J=300$ iterations and terminate early when $\|x_j-\tilde{x}_j\|_1^2<10^{-6}$ for \gls*{amd} and $\|u_j - u_{j+1}\|^2<10^{-6}$ for the dual gradient method. The regularized quadratic subproblems in the dual gradient method are solved using gradient ascent with a stepsize of $0.5$ and a maximum number of iterations set to  $100$.

The cost matrices are set to
\[
(\mathrm K_0^{(p)})_{ii'}=\|x_i-x_{i'}\|_2^p,\qquad
(\mathrm K_1^{(p)})_{kk'}=\|y_k-y_{k'}\|_2^p,
\qquad p\in\{2,6\},
\]
where the support points are described in \Cref{app:dataset-generation}. When $p=2$ the costs are CNT and are not CNT when $p=6$. We also consider two choices of regularization parameter, $\varepsilon=0.01$ and $\varepsilon=0.1$.

We visualize the couplings at different iterations for these  settings in \Cref{fig:couplings-c6,fig:couplings-c2}; these plots connect each point in the two datasets with a line whose width and opacity scale proportionally to the weight that the given coupling matrix assigns to that pair.

\section{Additional results}\label{app:additional-results}
\subsection{Gaussian-ring interaction objective}\label{app:interaction-objective}
When $p=2$, 
 \[
 \begin{aligned}
\mathsf{EGW}^{\varepsilon}_{2}(\mu_0,\mu_1)&=\int \kappa_0(x,x')^2 d\mu_0\otimes \mu_0(x,x') + \int \kappa_1(y,y')^2 d\mu_1\otimes \mu_1(y,y') 
\\
&+ \inf_{\pi\in\Pi(\mu_0,\mu_1)}\left\{-2\int \kappa_0(x,x')\kappa_1(y,y')d\pi\otimes \pi(x,y,x',y')+\varepsilon\mathsf{KL}(\pi\|\mu_0\otimes \mu_1)\right\},
\end{aligned} 
 \]
 so that
  the $\mathsf{EGW}$ objective decomposes into a constant term that depend only on the marginals and an interaction term. It suffices, therefore, to solve the minimization problem for the interaction term and hence only the interaction part is updated as the optimization progresses. 
  
  As explained in the main text, for the Gaussian-ring example these constant terms are excessively large as the cost is taken to be $\|\cdot\|^6$. Thus, in \Cref{fig:interaction-objective},  we plot the progress of the interaction term only as a function of the number of iterations for the various methods to get a clearer picture of the optimization progress. In particular, we note that the progress for the classical \gls*{md} method stalls and appears to oscillate.  %

\begin{figure}[!htb]
    \centering
    \includegraphics[width=.8\linewidth]{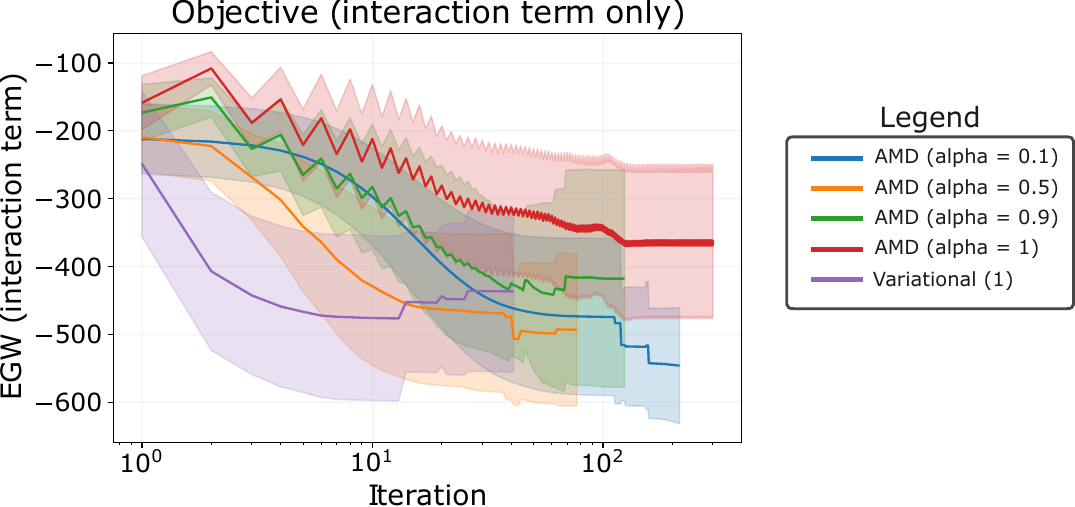}
    \caption{The interaction term from the EGW objective plotted for increasing number of iterations computed using the outputs of \gls*{amd} for $\alpha \in \{0.1, 0.5,0.9,1\}$ and the Variational (1) method.}
    \label{fig:interaction-objective}
\end{figure}

\subsection{Theoretical stepsizes}\label{app:theoretical-step}

As mentioned in the main text the theoretical stepsize from \Cref{prop:md-average-exact} or \Cref{prop:md-average-inexact} can be too conservative for \gls*{amd} and the method is seen to converge for larger step sizes as shown in \Cref{sec:experiments}. In this section, we report the theoretical stepsize choices for \gls*{amd} ($\alpha$) and the variational methods ($1/L$) for the Gaussian mixture and sphere examples. We highlight that we used the theoretical $\alpha$ from \Cref{prop:md-average-inexact}. In addition, we show in \Cref{fig:theoretical-amd} how \gls*{amd} with the theoretical stepsize compares with the other values of $\alpha$ considered in the text.
 Overall, the theoretical stepsize leads to slower convergence, %
 but still reaches approximate stationarity in the prescribed number of iterations in the sphere example. Experimental details are compiled in \Cref{app:experimental-settings}.

\begin{table}[!htb]
\centering
\small
\begin{tabular}{lccc}
\toprule
Method&Quantity & Gaussian mixture & Sphere \\
\midrule
AMD  &$\alpha$ & $(3.30\pm0.53)\times 10^{-4}$ & $(1.53\pm0.42)\times 10^{-3}$  \\
Variational ($L$) &$L_{\mathrm{var}}$ & $706 \pm 66$ & $15355 \pm 10740$ \\
Accelerated variational ($L$) &$L_{\mathrm{acc}}$ & $704 \pm 66$ & $15353 \pm 10740$ \\
\bottomrule
\end{tabular}
\caption{Theoretical step sizes, mean $\pm$ one standard deviation over 10 seeds.}
\label{tab:theoretical-steps}
\end{table}

\begin{figure}[!htb]
    \centering
    \includegraphics[width=.9\linewidth]{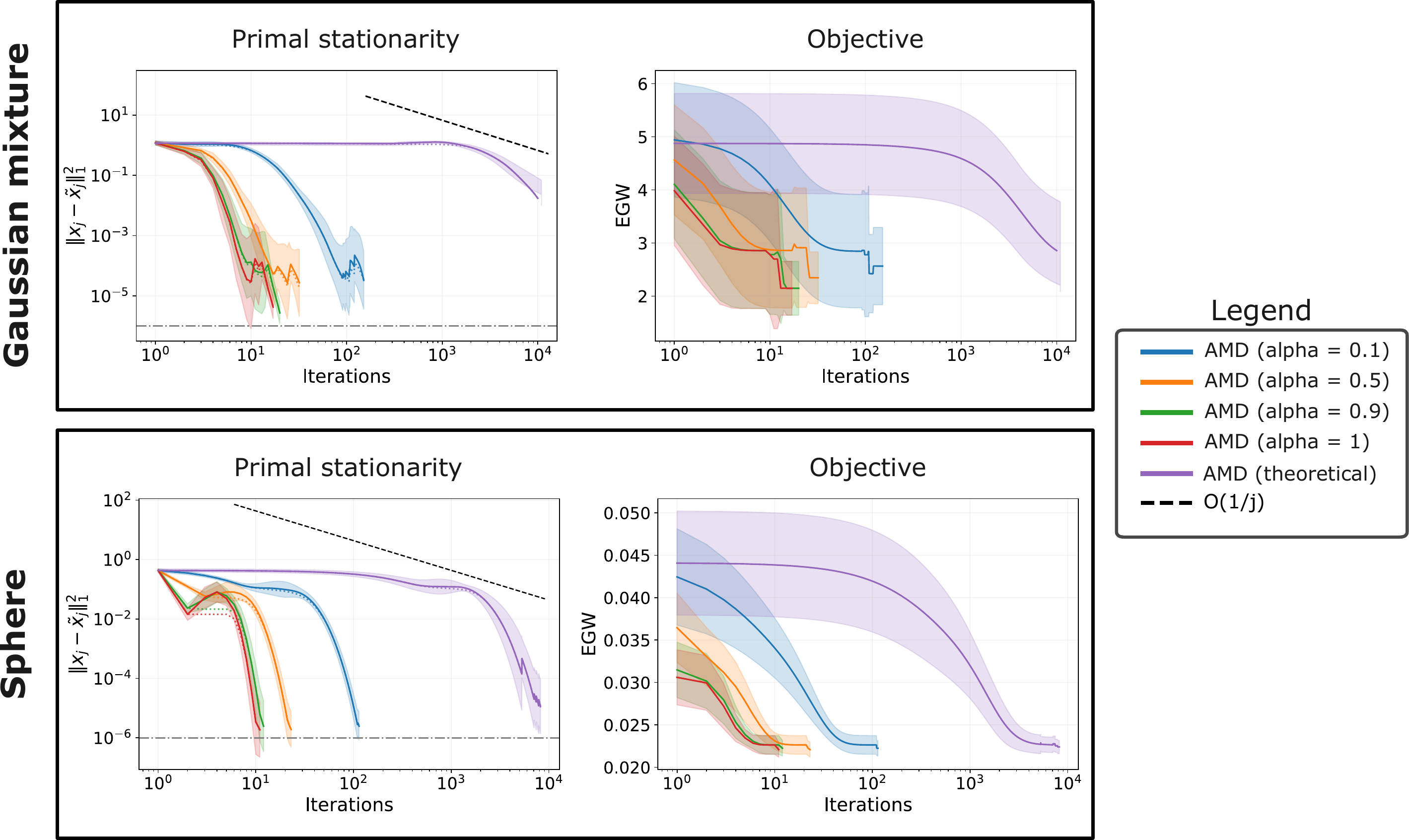}
    \caption{The primal stationarity and objective plots for \gls*{amd} for $\alpha \in \{0.1, 0.5,0.9,1, \text{theoretical}\}$ for the Gaussian mixture and sphere examples. The theoretical step sizes per experiment are reported in \Cref{tab:theoretical-steps}.}
    \label{fig:theoretical-amd}
\end{figure}

\subsection{Coupling visualizations}

Below we visualize the coupling at the last few iterations for the example adapted from \cite{houry2026}. We recall that the lines between the datasets are plotted such that their thickness and opacity is equal to the mass that the coupling assigns to that particular pair. 

When the intra-domain cost is $\|\cdot\|^6$ which is not CNT and $\varepsilon = 0.01$, the couplings for AMD with $\alpha =0.9$ and $\alpha =1$ are seen to alternate between two configurations, indicating that the method has failed to converge. When $\varepsilon=0.1$, this issue does not occur, indicating that the failure to converge is also caused by the choice of regularization parameter. When the intra-domain cost is set to  $\|\cdot\|^2$ which is CNT, all methods converge as expected.

\begin{figure}[!htb]
    \centering
    \includegraphics[width=.8\linewidth]{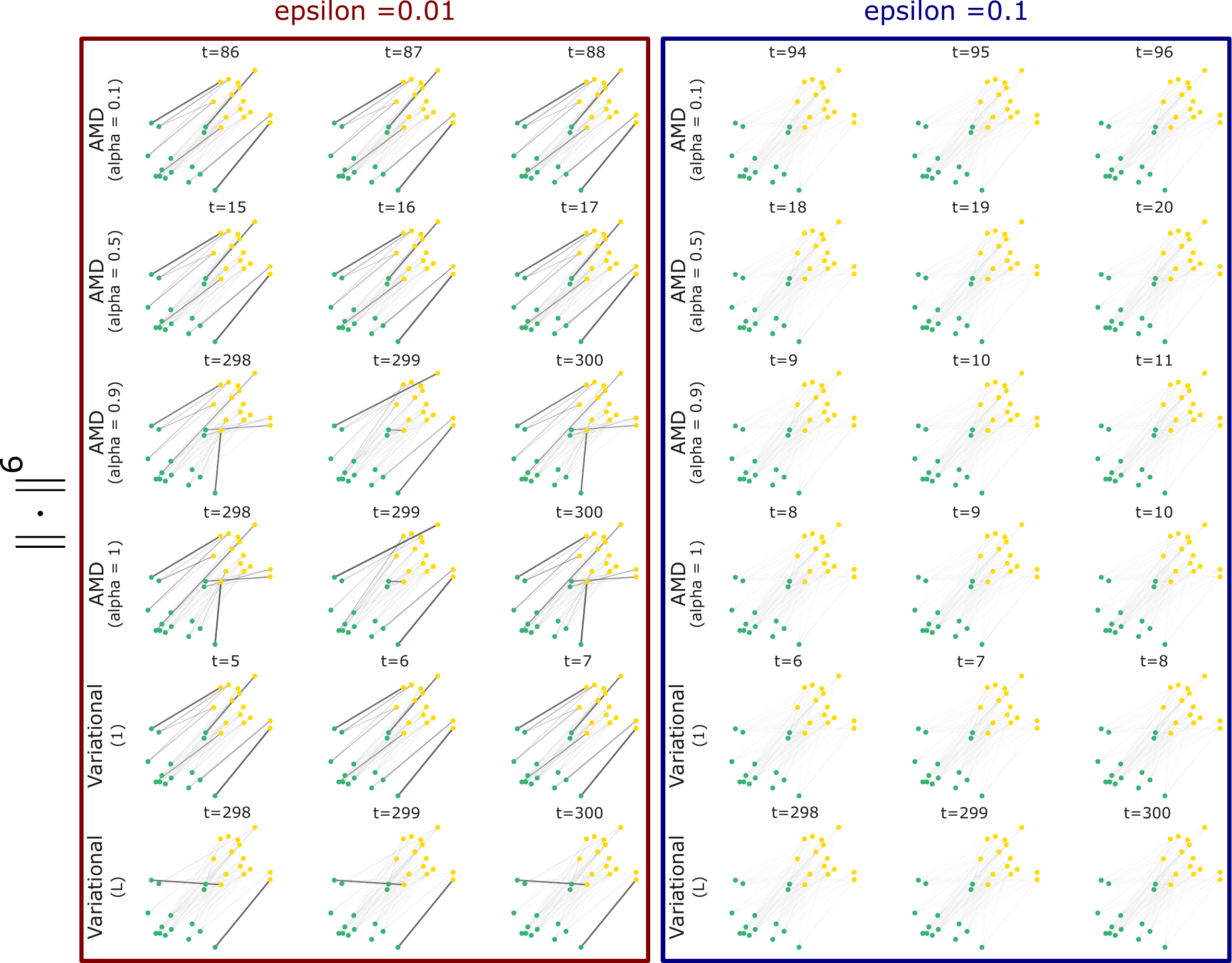}
    \caption{   
    Couplings at the final iterations for different algorithms applied to the example adapted from \cite{houry2026} when the intra-domain cost is set to $\|\cdot\|^6$. The support points of $\mu_0$ are in green and those from $\mu_1$ are in yellow. On the left, the regularization is set to $\varepsilon = 0.01$ and $\varepsilon=0.1$ on the right.} 
    \label{fig:couplings-c6}
\end{figure}

\newpage
\begin{figure}[!htb]
    \centering
    \includegraphics[width=.8\linewidth]{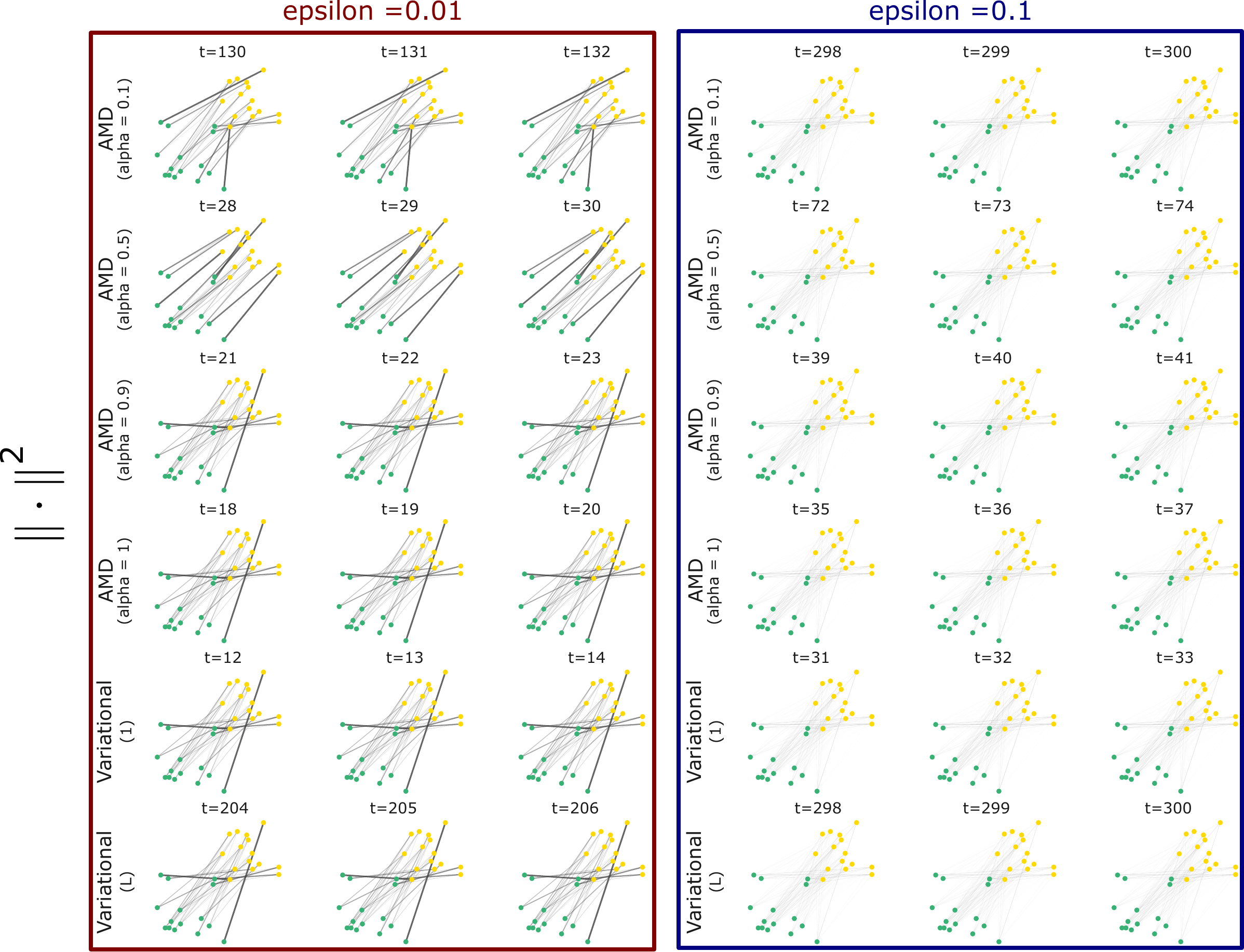}
    \caption{Couplings at the final iterations for different algorithms applied to the example adapted from \cite{houry2026} when the intra-domain cost is set to $\|\cdot\|^2$. The support points of $\mu_0$ are in green and those from $\mu_1$ are in yellow. On the left, the regularization is set to $\varepsilon = 0.01$ and $\varepsilon=0.1$ on the right}
    \label{fig:couplings-c2}
\end{figure}

\subsection{Complexity in terms of average Sinkhorn calls}
\Cref{tab:sinkhorn-calls} reports the average number of calls to the Sinkhorn solver for each algorithm together with one standard deviation. %

\begin{table}[!htb]
\centering
\small
\begin{tabular}{lccc}
\toprule
Method & Gaussian mixture & Sphere & Gaussian-ring \\
\midrule
AMD ($\alpha=0.1$)                 & $149 \pm 49$          & $118 \pm 9$          & $200 \pm 72$ \\
AMD ($\alpha=0.5$)                 & $30.8 \pm 10.0$       & $23.4 \pm 2.1$       & $99.1 \pm 77.6$ \\
AMD ($\alpha=0.9$)                 & $17.0 \pm 5.4$        & $12.5 \pm 1.4$       & $158 \pm 118$ \\
AMD ($\alpha=1$)                   & $15.0 \pm 5.0$        & $11.1 \pm 1.1$       & $300 \pm 0$ \\
Variational (1)           & $68.6 \pm 21.0$       & $32.2 \pm 2.4$       & $2458 \pm 1350$ \\
Variational ($L$)           & $14517 \pm 3407$      & $20006 \pm 0.5$      & --- \\
Accelerated variational ($L$) & $6938 \pm 1544$     & $15913 \pm 4641$     & --- \\
\bottomrule
\end{tabular}
\caption{Mean number of Sinkhorn calls ($\pm$ one standard deviation) over 10 seeds.}
\label{tab:sinkhorn-calls}
\end{table}

\end{document}